\documentclass{article}

\PassOptionsToPackage{numbers}{natbib}
\usepackage[preprint]{neurips_2026}
\makeatletter
\renewcommand{\@noticestring}{Code: \url{https://github.com/ETH-DISCO/n-vium}}
\renewcommand{\@notice}{%
  \enlargethispage{3\baselineskip}%
  \@float{noticebox}[b]%
    \footnotesize\@noticestring%
  \end@float%
}
\makeatother

\usepackage[utf8]{inputenc}
\usepackage[T1]{fontenc}
\usepackage{lmodern}
\usepackage{hyperref}
\usepackage{url}
\usepackage{booktabs}
\usepackage{amsfonts}
\usepackage{nicefrac}
\usepackage{microtype}
\usepackage{xcolor}

\usepackage{amsmath, amsthm}
\usepackage{bm}
\usepackage{graphicx}
\usepackage{multirow}
\usepackage{colortbl}
\usepackage{enumitem}
\usepackage{caption, subcaption}
\usepackage{placeins}
\usepackage{float}
\usepackage{wrapfig}
\usepackage{afterpage}
\usepackage{soul}
\usepackage{etoolbox}
\usepackage{algorithm, algpseudocode}
\usepackage{tabularx}
\newcolumntype{Y}{>{\centering\arraybackslash}X}
\newcolumntype{P}[1]{>{\centering\arraybackslash}p{#1}}

\usepackage{standalone}
\usepackage{tikz}
\usetikzlibrary{positioning, arrows.meta, fit, backgrounds, calc,
                 patterns, decorations.pathreplacing, decorations.markings, intersections}
\usepackage{pgfplots, pgfplotstable}
\pgfplotsset{compat=1.18}
\usepgfplotslibrary{fillbetween, groupplots}

\newtheorem{proposition}{Proposition}

\definecolor{riserOne}  {RGB}{80,170, 95}
\definecolor{riserTwo}  {RGB}{140, 60,200}
\definecolor{riserThree}{RGB}{210,155, 15}
\definecolor{riserFour} {RGB}{210, 85, 25}
\definecolor{savings}   {RGB}{135,190,230}

\definecolor{posGreenBg} {RGB}{208, 239, 213}
\definecolor{posGreenFg} {RGB}{ 34, 120,  51}
\definecolor{posRedBg}   {RGB}{252, 218, 218}
\definecolor{posRedFg}   {RGB}{180,  30,  30}
\definecolor{posGrayBg}  {RGB}{235, 235, 235}
\definecolor{posGrayFg}  {RGB}{120, 120, 120}
\definecolor{posYellowBg}{RGB}{255, 243, 200}
\definecolor{posYellowFg}{RGB}{160, 110,   0}
\definecolor{posOursBg}  {RGB}{214, 230, 248}

\newcommand{\colorintensity}{20}
\definecolor{exitEarly1base}{RGB}{0,160,60}
\definecolor{exitEarly2base}{RGB}{255,180,0}
\definecolor{exitEarly3base}{RGB}{30,120,200}
\definecolor{exitFinalbase} {RGB}{230,80,30}
\colorlet{exitEarly1}{exitEarly1base!\colorintensity!white}
\colorlet{exitEarly2}{exitEarly2base!\colorintensity!white}
\colorlet{exitEarly3}{exitEarly3base!\colorintensity!white}
\colorlet{exitFinal} {exitFinalbase!\colorintensity!white}

\newcommand{\tok}[2]{\colorbox{#1}{\strut #2}}
\newcommand{\nv}{\ensuremath{N_{V}}}
\title{N-vium: Mixture-of-Exits Transformer for Accelerated Exact Generation}

\author{%
  Aleksander Lorenc \quad
  Frédéric Berdoz \quad
  Joël Mathys \quad
  Roger Wattenhofer\\[.5em]
  ETH Zurich\\
  \texttt{\{alorenc, fberdoz, jmathys, wattenhofer\}@ethz.ch}
}
\begin{document}

\def\adaptBaselineCE{2.4535}
\def\adaptBaselinePPL{11.63}

\def\adaptNviumCE{2.4479}
\def\adaptNviumPPL{11.56}
\def\adaptNviumSpeedup{57.9}

\def\adaptPPLimprovement{0.55}

\maketitle

\begin{abstract}

Improving the inference efficiency of autoregressive transformers typically means reducing FLOPs per token, usually through approximations that degrade model quality. We introduce N-vium, a mixture-of-exits transformer that partially parallelizes computation across depth on standard hardware, increasing effective FLOPs per second rather than minimizing compute per token. N-vium attaches prediction heads at multiple depths and defines the next-token distribution as a learned mixture over these exits, with token-adaptive routing. This formulation strictly generalizes the standard transformer, which is recovered exactly when routing assigns zero mass to all intermediate heads. Sampling from the mixture is exact, and complete KV caches are recovered by deferring the upper-layer computation and batching it with later tokens. We pretrain N-vium at scales up to 1.5B parameters. Our largest model reaches $\adaptNviumSpeedup$\% wall-clock speedup over a parameter- and data-matched standard transformer at no perplexity cost.
\end{abstract}

\section{Introduction}

\begin{wrapfigure}[24]{r}{0.55\linewidth}
  \vspace{-1.9\baselineskip}
  \centering
  \newcommand{\lwBaseline}{0.9pt}
\newcommand{\lwStairs}  {0.7pt}
\newcommand{\lwRiser}   {1.2pt}
\def\speedupLabelY{11.5}

\begin{tikzpicture}
\begin{axis}[
  width  = \linewidth,
  unit vector ratio*={1 1 1},
  xmin = 0, xmax = 15,
  ymin = 0, ymax = 16.5,
  xtick = {0,1,...,14},
  xticklabels = {
    At, every, junction, a, wise, model, can, pick,
    the, shortest, route, that, is, still, correct
  },
  xticklabel style = {rotate=55, anchor=east, font=\footnotesize},
  ytick       = {0,2,...,14},
  minor ytick = {0,1,...,15},
  ylabel       = {Cumulative Latency},
  ylabel style = {font=\small},
  grid = both,
  major grid style = {gray!25, line width=0.4pt},
  minor grid style = {gray!12, line width=0.3pt},
  axis lines = left,
  clip       = false,
  legend columns = 3,
  legend style = {
    at={(0.5,-0.25)}, anchor=north,
    font=\tiny, cells={anchor=west},
    fill=white, fill opacity=0.9, draw=none, text opacity=1,
    column sep=1pt, inner sep=2pt,
  },
  legend image code/.code={\draw[#1,line width=\lwRiser] (0,0) -- (0.22cm,0);},
]

\addplot[
  name path = baseline, forget plot,
  black, line width = \lwBaseline,
] coordinates {
  (0,0)  (0,1)  (1,1)  (1,2)  (2,2)  (2,3)  (3,3)  (3,4)
  (4,4)  (4,5)  (5,5)  (5,6)  (6,6)  (6,7)  (7,7)  (7,8)
  (8,8)  (8,9)  (9,9)  (9,10) (10,10)(10,11)(11,11)(11,12)
  (12,12)(12,13)(13,13)(13,14)(14,14)(14,15)(15,15)
};

\addplot[
  name path = adaptive, forget plot,
  draw=none,
] coordinates {
  (0,0.00)  (0,0.50)  (1,0.50)
  (1,0.50)  (1,1.00)  (2,1.00)
  (2,1.00)  (2,2.00)  (3,2.00)
  (3,2.00)  (3,2.25)  (4,2.25)
  (4,2.25)  (4,3.25)  (5,3.25)
  (5,3.25)  (5,4.25)  (6,4.25)
  (6,4.25)  (6,4.75)  (7,4.75)
  (7,4.75)  (7,5.50)  (8,5.50)
  (8,5.50)  (8,6.00)  (9,6.00)
  (9,6.00)  (9,7.00)  (10,7.00)
  (10,7.00) (10,7.75) (11,7.75)
  (11,7.75) (11,8.25) (12,8.25)
  (12,8.25) (12,8.50) (13,8.50)
  (13,8.50) (13,9.25) (14,9.25)
  (14,9.25) (14,10.00)(15,10.00)
};

\addplot[savings, opacity=0.40, forget plot] fill between[of=adaptive and baseline];

\newcommand{\depthBlock}[5]{%
  \fill[#1!30] (axis cs:#2,#3) rectangle (axis cs:#4,#5);
  \draw[#1!70, line width=0.5pt] (axis cs:#2,#3) rectangle (axis cs:#4,#5);}

\depthBlock{riserOne}{3}{2.00}{4}{2.25}
\depthBlock{riserOne}{12}{8.25}{13}{8.50}

\depthBlock{riserTwo}{0}{0.00}{1}{0.50}
\depthBlock{riserTwo}{1}{0.50}{2}{1.00}
\depthBlock{riserTwo}{6}{4.25}{7}{4.75}
\depthBlock{riserTwo}{8}{5.50}{9}{6.00}
\depthBlock{riserTwo}{11}{7.75}{12}{8.25}

\depthBlock{riserThree}{7}{4.75}{8}{5.50}
\depthBlock{riserThree}{10}{7.00}{11}{7.75}
\depthBlock{riserThree}{13}{8.50}{14}{9.25}
\depthBlock{riserThree}{14}{9.25}{15}{10.00}

\depthBlock{riserFour}{2}{1.00}{3}{2.00}
\depthBlock{riserFour}{4}{2.25}{5}{3.25}
\depthBlock{riserFour}{5}{3.25}{6}{4.25}
\depthBlock{riserFour}{9}{6.00}{10}{7.00}

\draw[<->, thick, black!70] (axis cs:15.4, 15) -- (axis cs:15.4, 10);
\node[left, font=\footnotesize, align=center] at (axis cs:15.4, \speedupLabelY) {50\,\%\\Speedup};

\addlegendimage{area legend, fill=riserOne!30,   draw=riserOne!70}   \addlegendentry{25\,\% Depth}
\addlegendimage{area legend, fill=riserTwo!30,   draw=riserTwo!70}   \addlegendentry{50\,\% Depth}
\addlegendimage{area legend, fill=riserThree!30, draw=riserThree!70} \addlegendentry{75\,\% Depth}
\addlegendimage{area legend, fill=riserFour!30,  draw=riserFour!70}  \addlegendentry{100\,\% Depth}
\addlegendimage{black, line width=\lwBaseline}
  \addlegendentry{Baseline}
\addlegendimage{area legend, fill=savings, opacity=0.6}
  \addlegendentry{Latency Savings}

\end{axis}
\end{tikzpicture}
  \caption{Illustrative example of cumulative latency of Quadrivium vs.\ a standard transformer.
    By early sampling, Quadrivium achieves up to 57\% reduction in cumulative latency.
    Tokens are routed to shallower prediction heads, reducing the sequential depth of each decoding step.
    }
  \label{fig:latency-staircase}
\end{wrapfigure}

In a standard autoregressive large language model (LLM), tokens are processed sequentially through a stack of decoder layers. At every layer, the hidden representation of the token is used to compute a key-value (KV) pair that is cached for future tokens to attend to. After exiting the final layer, the hidden representation is projected onto the vocabulary to produce the next-token prediction. The forward pass therefore serves two distinct roles: building KV representations at every layer for the computation of future tokens, and processing the current token to predict the next one after the last layer. This procedure couples both computations through all layers and requires the full forward pass each time.

However, empirical investigations of latent representations using the logit
lens~\citep{nostalgebraist2020logitlens, belrose2023eliciting} show that for many tokens,
the predicted distribution converges well before the final
layer~\citep{schuster2021consistent, geva2022transformer}. Although full depth is
still needed to provide complete KV representations for future tokens to attend to, it suggests that not every token requires full processing depth for immediate prediction. Therefore, in principle, these two functions can be
decoupled: the current token processing could predict the next token early and complete its remaining future representations later.

Early exit methods exploit this observation by attaching prediction heads at intermediate
layers~\citep{xin2020deebert, elbayad2020depth,
schuster2021consistent, elhoushi2024layerskip} to reduce FLOPs per token in favor of decreasing latency. However, these methods train and treat intermediate layer predictions as approximations of the final layer ~$L$. This inherently approximate framing has several undesirable drawbacks. The decision when to exit requires additional hand-crafted confidence thresholds that need
calibration~\citep{xin2020deebert, schuster2021consistent}. Moreover, early predictions are weaker than full-depth ones due to a train-test mismatch, so speed comes at the cost of quality. Finally, when a token exits early, its KV caches for the bypassed layers are typically skipped to reduce FLOPs per token.

We propose a different approach. Instead of treating early predictions as approximations of
the final-layer distribution, we redefine the model itself as a \emph{mixture over exits}:
$\smash{\pi_{\mathrm{mix}} = \sum_{k=1}^{N} p_k\, \pi_k}$, where $\pi_k$ is the distribution at exit~$k$ and $p_k$ are learned token-adaptive mixture weights. This
formulation extends and strictly generalizes the standard transformer, which is recovered when all
early routing weights are zero. Because the model is a mixture, each prediction is exact and is sampled from the correct distribution optimized during training. It can naturally decouple the two roles of the full forward pass: next-token prediction happens at the selected exit, whereas KV representations are built across all layers independently on demand. Specifically, the remaining layer computations needed for KV caches can be deferred and batched
in parallel with future tokens, similarly to the parallel verification step in speculative decoding~\citep{leviathan2023fast, bae2023fast}. 
The speedup therefore comes from increasing FLOPs/second on standard hardware without any additional inference computation, not from reducing FLOPs per token. 

\begin{figure}[t]
  \centering
  \input{assets/figures/overview}
  \caption{Overview of the N-vium architecture. The backbone (left) processes the input through $N$ blocks of transformer layers. At each junction, a router (diamond) outputs an exit probability $w_k$ that determines the likelihood of selecting the routed exit path (orange). Exit tokens pass through an adapter and the shared LM head to produce the exit distribution $\pi_k$. The mixture distribution $\pi_{\mathrm{mix}}$ is formed as a weighted combination of the four exit distributions with mixture weights $p_k$. The inset (right) zooms into a single junction.}
  \label{fig:overview}
\end{figure}
We introduce \textbf{N-vium}, a mixture-of-exits framework that is trained to directly optimize this mixture objective. The name derives from the Latin suffix \emph{-vium},
denoting a meeting of ways (e.g., \emph{bivium}, \emph{trivium}, \emph{quadrivium}),
reflecting the routing decision at each sampling junction. In this view, a standard transformer
 is a \emph{univium}: a single junction at the final layer.

We summarize our contributions as follows:
\begin{itemize}[leftmargin=*, topsep=2pt, itemsep=1pt]
    \item We introduce N-vium, a mixture-of-exits framework extension to standard decoder-only transformer architecture which enables learned, token-adaptive depth routing with exact early sampling from the token distribution. 

    \item We show how to train N-vium in practice, including how to steer models towards efficiency during training, and investigate its routing behavior.

    \item We pre-train and evaluate our proposed method across multiple scales up to 1.5B parameters and achieve consistent wall-clock speedup of up to $\adaptNviumSpeedup$\% compared to parameter- and data-matched transformer baseline at no perplexity degradation.

\end{itemize}

\begin{table}[!t]
  \caption{Comparison of conditional computation and efficient inference methods.
    \textbf{Exact inference}: the method samples from the trained distribution without approximation.
    \textbf{Learned routing}: per-token computation is allocated by a trained router.
    \textbf{No added FLOPs}: minimal per token inference compute overhead compared to the dense baseline
    \textbf{Decode-time speedup}: reduces wall-clock latency during autoregressive generation.
    }
  \label{tab:positioning}
  \newcommand{\yes}{\cellcolor{posGreenBg}\textcolor{posGreenFg}{\textbf{\checkmark}}}
  \newcommand{\no} {\cellcolor{posRedBg}\textcolor{posRedFg}{\textbf{\texttimes}}}
  \newcommand{\prt}{\cellcolor{posYellowBg}\textcolor{posYellowFg}{\textbf{\textasciitilde}}}
  \newcommand{\na} {\cellcolor{posGrayBg}\textcolor{posGrayFg}{---}}
  \centering
  \renewcommand{\arraystretch}{1}
  \small
  \begin{tabularx}{\textwidth}{P{3cm} YYYYYYY}
    \toprule
    \textbf{Criterion}
      & \textbf{Early exit}
      & \textbf{LayerSkip}
      & \textbf{Spec.\ dec.}
      & \textbf{MoE}
      & \textbf{MoD}
      & \textbf{Pruning}
      & \textbf{N-vium} \\
    \midrule
    Exact inference   & \no  & \yes & \yes & \yes & \no & \no  & \yes \\
    Learned routing   & \no  & \no  & \na  & \yes & \yes & \na  & \yes \\
    No added FLOPs    & \no & \no  & \no  & \yes & \yes & \yes & \yes \\
    Decode-time speedup
                      & \yes & \yes & \yes & \no & \yes  & \yes & \yes \\
    \bottomrule
  \end{tabularx}
\end{table}

\section{Related Work}

\begin{figure}[t!]
  \centering
  \def\paletteChoice{2}

\ifcase\paletteChoice
  \definecolor{colFourVium}     {RGB}{0,   114, 178}
  \definecolor{colFourViumLarge}{RGB}{213,  94,   0}
  \definecolor{colNVium}        {RGB}{ 86, 180, 233}
  \definecolor{colTwoVium}      {RGB}{  0, 158, 115}
\or
  \definecolor{colFourVium}     {RGB}{  0, 119, 187}
  \definecolor{colFourViumLarge}{RGB}{238, 119,  51}
  \definecolor{colNVium}        {RGB}{ 51, 187, 238}
  \definecolor{colTwoVium}      {RGB}{  0, 153, 136}
\or
  \definecolor{colFourVium}     {RGB}{ 31, 119, 180}
  \definecolor{colFourViumLarge}{RGB}{255, 127,  14}
  \definecolor{colNVium}        {RGB}{148, 103, 189}
  \definecolor{colTwoVium}      {RGB}{ 44, 160,  44}
\fi

\def\paretoScale{1.0}
\def\paretoMarkSize{3pt}
\def\paretoLineWidth{1.5pt}
\def\paretoLineOpacity{0.6}

\tikzset{
  lblNvBetaZero/.style  = {xshift=-8pt,  yshift=2pt},
  lblNvBetaOne/.style   = {xshift=-10pt,  yshift=2pt},
  lblNvBetaTwo/.style   = {xshift=2pt,  yshift=2pt},
  lblNvBetaThree/.style = {xshift=20pt, yshift=2pt},
  lblNvBetaFour/.style  = {xshift=-2pt, yshift=-2pt},
  lblCalmNineZero/.style = {xshift=0pt, yshift=2pt},
  lblCalmEight/.style    = {xshift=2pt,  yshift=2pt},
  lblCalmSeven/.style    = {xshift=-2pt, yshift=-2pt},
  lblCalmSix/.style      = {xshift=-2pt, yshift=-2pt},
  lblNvNTwo/.style    = {xshift=0pt,  yshift=-6pt},
  lblNvNThree/.style  = {xshift=-1pt,  yshift=1pt},
  lblNvNFour/.style   = {xshift=-12pt, yshift=-14pt},
  lblNvNSix/.style    = {xshift=-10pt, yshift=2pt},
  lblNvNEight/.style  = {xshift=-14pt, yshift=-14pt},
  lblNvNTwelve/.style = {xshift=-14pt,  yshift=-14pt},
  lblNvPosTwo/.style      = {xshift=0pt,  yshift=-12pt},
  lblNvPosFour/.style     = {xshift=2pt,   yshift=-12pt},
  lblNvPosSix/.style      = {xshift=0pt,   yshift=-12pt},
  lblNvPosEight/.style    = {xshift=2pt,   yshift=2pt},
  lblNvPosTen/.style      = {xshift=0pt, yshift=-12pt},
  lblNvPosTwelve/.style   = {xshift=2pt,   yshift=2pt},
  lblNvPosFourteen/.style   = {xshift=0pt, yshift=-12pt},
  lblNvPosSixteen/.style    = {xshift=-14pt, yshift=2pt},
  lblNvPosEighteen/.style   = {xshift=0pt,   yshift=-12pt},
  lblNvPosTwentyTwo/.style  = {xshift=0pt,   yshift=-12pt},
  lblNvOneFiveB/.style = {xshift=0pt, yshift=4pt},
  lblLsAA/.style = {xshift=0pt,  yshift=3pt},
  lblLsAB/.style = {xshift=0pt,  yshift=-3pt},
  lblLsBA/.style = {xshift=-12pt, yshift=3pt},
  lblLsBB/.style = {xshift=0pt,  yshift=-3pt},
  lblLsCA/.style = {xshift=0pt,  yshift=-3pt},
  lblLsCB/.style = {xshift=0pt,  yshift=3pt},
}

\scalebox{0.82}{%
\begin{tikzpicture}
\begin{axis}[
  width  = \linewidth,
  height = 0.65\linewidth,
  xlabel = {Measured Speedup (\%)},
  ylabel = {Perplexity Difference (\%)},
  xmin = -12, xmax = 62,
  ymin = -7,  ymax =  6.5,
  xtick = {-10, 0, 10, 20, 30, 40, 50, 60},
  ytick = {-6, -4, -2, 0, 2, 4, 6},
  grid  = both,
  grid style = {gray!20, line width=0.6pt},
  tick align = outside,
  font = \large,
  label style = {font=\large},
  tick label style = {font=\large},
  legend cell align = left,
  legend columns = 4,
  legend style = {
    at={(0.5,-0.18)}, anchor=north,
    font=\normalsize, draw=none, fill=none, inner sep=4pt,
    column sep=10pt,
  },
  clip = false,
]

\draw[black!30, dashed, line width=0.8pt]
  (axis cs:0,-7) -- (axis cs:0,6.5);
\draw[black!30, dashed, line width=0.8pt]
  (axis cs:-12,0) -- (axis cs:62,0);

\addplot[
  color = black,
  mark = *,
  mark size = \paretoMarkSize,
  only marks,
  mark options = {fill=black, solid, opacity=1},
] coordinates {(0, 0)};
\addlegendentry{Baseline}

\node[font=\footnotesize, gray, anchor=north east, align=center] at (axis cs: 62,  6.5) {faster\\higher};
\node[font=\footnotesize, gray, anchor=south east, align=center] at (axis cs: 62, -7)   {faster\\lower};
\node[font=\footnotesize, gray, anchor=north west, align=center] at (axis cs:-12,  6.5) {slower\\higher};
\node[font=\footnotesize, gray, anchor=south west, align=center] at (axis cs:-12, -7)   {slower\\lower};

\addplot[
  color      = {colFourVium},
  mark        = *,
  mark size   = \paretoMarkSize,
  dashed, line width=\paretoLineWidth, opacity=\paretoLineOpacity,
  mark options= {fill={colFourVium}, solid, opacity=1},
] coordinates {
  ( 6.4,  3.73)
  (21.5,  3.54)
  (31.4,  2.37)
  (47.9,  1.09)
  (57.7, -0.70)
};
\addlegendentry{4-vium}

\addplot[
  color      = {colNVium},
  mark        = *,
  mark size   = \paretoMarkSize,
  only marks,
  mark options= {fill={colNVium}, solid, opacity=1},
] coordinates {
  (18.9,  2.28)
  (25.6,  3.73)
  (19.6,  3.54)
  (15.7,  3.72)
  (14.1,  3.01)
  ( 4.1,  3.20)
};
\addlegendentry{N-vium}

\node[font=\footnotesize, anchor=south west, lblNvNTwo,
      color={colNVium}]
  at (axis cs:18.9, 2.28) {$N_V{=}2$};
\node[font=\footnotesize, anchor=south west, lblNvNThree,
      color={colNVium}]
  at (axis cs:25.6, 3.73) {$N_V{=}3$};
\node[font=\footnotesize, anchor=south west, lblNvNFour,
      color={colNVium}]
  at (axis cs:19.6, 3.54) {$N_V{=}4$};
\node[font=\footnotesize, anchor=south west, lblNvNSix,
      color={colNVium}]
  at (axis cs:15.7, 3.72) {$N_V{=}6$};
\node[font=\footnotesize, anchor=south west, lblNvNEight,
      color={colNVium}]
  at (axis cs:14.1, 3.01) {$N_V{=}8$};
\node[font=\footnotesize, anchor=south west, lblNvNTwelve,
      color={colNVium}]
  at (axis cs: 4.1, 3.20) {$N_V{=}12$};

\addplot[
  color      = {colTwoVium},
  mark        = triangle*,
  mark size   = \paretoMarkSize,
  only marks,
  mark options= {fill={colTwoVium}, solid, opacity=1},
] coordinates {
  (17.1,  1.93)
  (24.6,  1.24)
  (22.3,  1.57)
  (14.9,  1.72)
  ( 8.9,  1.77)
  ( 1.5,  1.46)
};
\addlegendentry{2-vium}

\node[font=\footnotesize, anchor=south, lblNvPosTwo,
      color={colTwoVium}]
  at (axis cs:17.1, 1.93) {2};
\node[font=\footnotesize, anchor=south, lblNvPosSix,
      color={colTwoVium}]
  at (axis cs:24.6, 1.24) {6};
\node[font=\footnotesize, anchor=south, lblNvPosTen,
      color={colTwoVium}]
  at (axis cs:22.3, 1.57) {10};
\node[font=\footnotesize, anchor=south, lblNvPosFourteen,
      color={colTwoVium}]
  at (axis cs:14.9, 1.72) {14};
\node[font=\footnotesize, anchor=south, lblNvPosEighteen,
      color={colTwoVium}]
  at (axis cs: 8.9, 1.77) {18};
\node[font=\footnotesize, anchor=south, lblNvPosTwentyTwo,
      color={colTwoVium}]
  at (axis cs: 1.5, 1.46) {22};

\addplot[
  color      = {colFourViumLarge},
  mark        = star,
  mark size   = \paretoMarkSize,
  only marks,
  mark options= {fill={colFourViumLarge}, solid, opacity=1},
] coordinates {
  (\adaptNviumSpeedup, \adaptPPLimprovement)
};
\addlegendentry{\shortstack[l]{4-vium\\1.5B}}

\node[font=\footnotesize, anchor=south, lblNvOneFiveB,
      color={colFourViumLarge}]
  at (axis cs:\adaptNviumSpeedup, \adaptPPLimprovement) {1.5B};

\node[font=\footnotesize, anchor=south west, lblNvBetaZero,
      color={colFourVium}]
  at (axis cs: 6.4,  3.73) {$\beta{=}0$};
\node[font=\footnotesize, anchor=south west, lblNvBetaOne,
      color={colFourVium}]
  at (axis cs:21.5,  3.54) {$\beta{=}0.1$};
\node[font=\footnotesize, anchor=south west, lblNvBetaTwo,
      color={colFourVium}]
  at (axis cs:31.4,  2.37) {$\beta{=}0.2$};
\node[font=\footnotesize, anchor=south east, lblNvBetaThree,
      color={colFourVium}]
  at (axis cs:47.9,  1.09) {$\beta{=}0.3$};
\node[font=\footnotesize, anchor=north east, lblNvBetaFour,
      color={colFourVium}]
  at (axis cs:57.7, -0.70) {$\beta{=}0.4$};

\addplot[
  color      = {rgb,255: red,150; green,150; blue,150},
  mark        = square*,
  mark size   = \paretoMarkSize,
  dashed, line width=\paretoLineWidth, opacity=\paretoLineOpacity,
  mark options= {fill={rgb,255: red,150; green,150; blue,150}, solid, opacity=1},
] coordinates {
  (19.68, -5.58)
  (17.29, -3.80)
  (14.38, -2.46)
  (10.71, -1.54)
};
\addlegendentry{CALM$^*$}

\node[font=\footnotesize, anchor=south, lblCalmNineZero,
      color={rgb,255: red,150; green,150; blue,150}]
  at (axis cs:10.71, -1.54) {$\tau{=}0.9$};
\node[font=\footnotesize, anchor=south west, lblCalmEight,
      color={rgb,255: red,150; green,150; blue,150}]
  at (axis cs:14.38, -2.46) {$\tau{=}0.8$};
\node[font=\footnotesize, anchor=north east, lblCalmSeven,
      color={rgb,255: red,150; green,150; blue,150}]
  at (axis cs:17.29, -3.80) {$\tau{=}0.7$};
\node[font=\footnotesize, anchor=north east, lblCalmSix,
      color={rgb,255: red,150; green,150; blue,150}]
  at (axis cs:19.68, -5.58) {$\tau{=}0.6$};

\addplot[
  color      = {rgb,255: red,90; green,90; blue,90},
  mark        = diamond*,
  mark size   = \paretoMarkSize,
  dashed, line width=\paretoLineWidth, opacity=\paretoLineOpacity,
  mark options= {fill={rgb,255: red,90; green,90; blue,90}, solid, opacity=1},
] coordinates {
  (-5.49, -3.15)
  (-1.21, -3.15)
  (12.36, -3.15)
  (21.75, -3.15)
  (43.59, -3.15)
  (46.41, -3.15)
};
\addlegendentry{LayerSkip}

\node[font=\footnotesize, anchor=south, lblLsAA, color={rgb,255: red,90; green,90; blue,90}]
  at (axis cs:-5.49, -3.15) {(6,12)};
\node[font=\footnotesize, anchor=north, lblLsAB, color={rgb,255: red,90; green,90; blue,90}]
  at (axis cs:-1.21, -3.15) {(4,12)};
\node[font=\footnotesize, anchor=south, lblLsBA, color={rgb,255: red,90; green,90; blue,90}]
  at (axis cs:12.36, -3.15) {(4,8)};
\node[font=\footnotesize, anchor=north, lblLsBB, color={rgb,255: red,90; green,90; blue,90}]
  at (axis cs:21.75, -3.15) {(6,5)};
\node[font=\footnotesize, anchor=north, lblLsCA, color={rgb,255: red,90; green,90; blue,90}]
  at (axis cs:43.59, -3.15) {(2,5)};
\node[font=\footnotesize, anchor=south, lblLsCB, color={rgb,255: red,90; green,90; blue,90}]
  at (axis cs:46.41, -3.15) {(2,8)};

\end{axis}
\end{tikzpicture}%
}
  \caption{Quality-speed Pareto front. Wall-clock speedup measured on an A100 GPU over a parameter-matched dense baseline against perplexity difference.
    All models use the same 24-layer, 768-wide backbone architecture.
    \textbf{4-vium}: labels show varying compute penalty $\beta$.
    \textbf{N-vium}: labels show varying number of exits $\nv$.
    \textbf{2-vium}: labels show the backbone attachment layer of the early sampling head.
    \textbf{4-vium 1.5B}: our largest and fastest model (1536-wide, 48-layer), trained with an adaptive $\beta$.
    \textbf{CALM$^*$}: our reimplementation of CALM~\citep{schuster2022confident}, labelled by exit threshold $\tau$.
    \textbf{LayerSkip}: labelled by (exit layer, draft length).}
  \label{fig:comparison-pareto}
\end{figure}

\paragraph{Inference efficiency.}
Improving the efficiency of large language models is an active and broad research area.
Techniques such as knowledge distillation~\citep{hinton2015distilling, jiao2020tinybert, wang2020minilm, sanh2019distilbert}, pruning~\citep{fan2020reducing, michel2019sixteen},
and quantization~\citep{shen2020qbert, sun2020mobilebert, bai2021binarybert} reduce memory and arithmetic overhead.
Architectural modifications such as group query attention~\citep{ainslie2023gqa} reduce the KV-cache bandwidth with minimal quality loss.
Mixture-of-Experts (MoE) architectures~\citep{fedus2022switch, du2022glam, jiang2024mixtral, zhou2022mixture} achieve superior quality per unit of active compute
by routing each token to a sparse subset of expert layers.
Speculative decoding~\citep{stern2018blockwise, leviathan2023fast, chen2023accelerating, zhang2024draft} is a notable technique that accelerates generation without any quality degradation, by drafting tokens with a cheap model and verifying them in a single parallel pass of the target model.

\paragraph{Adaptive computation.}
Another line of work pursues a fundamentally different efficiency strategy: allocate computation \emph{adaptively} based on input difficulty.
One family of methods assigns each token a variable number of processing steps via learned routing: either by deciding per-layer whether
a token should execute or skip that layer~\citep{graves2016adaptive, dehghani2018universal, raposo2024mixture, liu2021faster},
or by controlling how many recursive or recurrent passes a token undergoes~\citep{bae2025mixture, zhu2025scaling, alizadeh2024duollm}.
Instead, a second family stops the forward pass of a token entirely at an intermediate layer, emitting a prediction early and skipping
all remaining computations~\citep{elhoushi2024layerskip, kavehzadeh2024sorted}.

\paragraph{Early exit.}
Early exit is a specific form of adaptive computation in which a token or sample stops processing in an intermediate layer
and emits a prediction directly from that point~\citep{yin2022adavit, wang2018skipnet, liu2021faster}.
The idea was extensively explored in the context of encoder models such as BERT, where confidence or patience criteria determine
the exit layer per input~\citep{xin2020deebert,
xin2021berxit, zhou2020bert, hou2020dynabert}.
Early exit has also been adapted to decoder-only autoregressive models, where methods use confidence scores, hidden-state saturation,
or learned classifiers to make per-token exit decisions~\citep{schuster2021consistent, schuster2022confident, elbayad2020depth, zeng2024consistentee}.
CALM~\citep{schuster2022confident} is a prominent example, applying per-token confidence-based halting and allowing
different tokens within the same sequence to consume different amounts of compute.

\paragraph{Layer saturation.}
A key observation motivating the early exit approach is that a token's prediction often converges well before the final layer.
Projecting intermediate hidden states onto the vocabulary~\citep{nostalgebraist2020logitlens, belrose2023eliciting} reveals
that for many tokens, the output distribution stabilizes early~\citep{kaya2019shallow, shan2024early, geva2022transformer}
This motivates routing easy tokens to shallow prediction heads and reserving full depth for harder ones.

\paragraph{Positioning.}
Our work differs from each of these lines of research along several axes. Early exit methods such as CALM
retrofit variable-depth inference onto pretrained backbones via confidence thresholds, skip
upper-layer computation entirely, and approximate the resulting KV cache gaps, introducing train/test mismatch,
inexact sampling, and quality degradation. We instead train from scratch with early sampling as a first-class
objective, recover all KV caches exactly by deferring upper-layer computation and batching it with future
tokens, and sample exactly from the trained distribution. No computation is skipped. Speculative decoding
achieves lossless acceleration, but requires a separate draft model
and accept/reject sampling. Mixture of experts (MoE) architectures route in the \emph{width} dimension, selecting which parameters
to activate, and do not reduce per-token latency. Mixture of Depths (MoD) routes tokens to skip individual layers. It introduces approximations at inference time and wall-clock decode speedup is not guaranteed.

Our method is, to our knowledge, the first to satisfy all of these criteria:
exact sampling, full KV cache recovery, learned routing, and decode-time speedup.
Table~\ref{tab:positioning} summarizes this comparison between methods.

\section{Background}

We introduce N-vium, which extends the standard decoder transformer to a mixture-of-exits architecture. Notably, our technique itself is rather lightweight and, as such, compatible with most transformer variants, including different attention mechanisms, positional encodings, and activation functions.

In the following, we introduce the necessary naming and variable convention. Given a token sequence $x_1, \ldots, x_T$, each token $x_t$ is embedded into a $d$-dimensional vector $\mathbf{h}_t^{(0)} \in \mathbb{R}^d$, initializing the residual stream.
The model applies $L$ decoder transformer layers, each refining the stream and consisting of the common causal attention, MLP and normalization components.
Let $\mathcal{V}$ denote the vocabulary. The final hidden state $\mathbf{h}_t^{(L)} \in \mathbb{R}^d$ determines the logits $\mathbf{z}_t = W_{\mathrm{lm}}\, \mathbf{h}_t^{(L)}$, where $W_{\mathrm{lm}} \in \mathbb{R}^{|\mathcal{V}| \times d}$ denotes the language modeling head.
Softmax gives the normalized next token distribution $p(x_{t+1} \mid x_{1:t}) = \mathrm{softmax}(\mathbf{z}_t)$, and the model minimizes the expected negative log-likelihood of the training corpus using the next token prediction.
During autoregressive generation, each layer $\ell$ retains a \emph{layer state} $s_{t'}^{(\ell)}$ for every past token (the key-value projections of the attention sub-layer) also referred to as the KV cache.
Generating any future token requires $s_{t'}^{(\ell)}$ to be available for all $t'\leq t$ and all $\ell \in \{1,\ldots,L\}$.

\begin{figure}[t]
  \centering
  \def\NCOLS{5}
\def\NROWS{8}
\def\NGAPFACTOR{1.5}
\pgfmathsetlengthmacro{\CWlen}{\textwidth / (3*\NCOLS + 2*\NGAPFACTOR + 3)}
\pgfmathsetmacro{\CW}{\CWlen / 1cm}
\pgfmathsetmacro{\CH}{\CW * 0.6}
\pgfmathsetmacro{\NGAP}{\CW * \NGAPFACTOR}

\definecolor{tone}  {RGB}{50,110,190}
\definecolor{ttwo}  {RGB}{230,115,20}
\definecolor{tthree}{RGB}{34,160,80}
\definecolor{tfour} {RGB}{140,60,200}
\definecolor{tfive} {RGB}{200,40,40}

\colorlet{skipfill}{white}
\colorlet{skipborder}{black!18}

\newcommand{\CellA}[2]{%
  \fill[tone!30]   (#1,#2) rectangle ++(1,1);
  \draw[tone!70,   line width=0.5pt] (#1,#2) rectangle ++(1,1);}

\newcommand{\CellB}[2]{%
  \fill[ttwo!30]   (#1,#2) rectangle ++(1,1);
  \draw[ttwo!70,   line width=0.5pt] (#1,#2) rectangle ++(1,1);}

\newcommand{\CellC}[2]{%
  \fill[tthree!30] (#1,#2) rectangle ++(1,1);
  \draw[tthree!70, line width=0.5pt] (#1,#2) rectangle ++(1,1);}

\newcommand{\CellD}[2]{%
  \fill[tfour!30]  (#1,#2) rectangle ++(1,1);
  \draw[tfour!70,  line width=0.5pt] (#1,#2) rectangle ++(1,1);}

\newcommand{\CellE}[2]{%
  \fill[tfive!25]  (#1,#2) rectangle ++(1,1);
  \draw[tfive!65,  line width=0.5pt] (#1,#2) rectangle ++(1,1);}

\newcommand{\Cskip}[2]{%
  \fill[skipfill] (#1,#2) rectangle ++(1,1);
  \draw[skipborder,line width=0.4pt] (#1,#2) rectangle ++(1,1);}

\newcommand{\CellPiggyC}[2]{%
  \fill[tthree!8]  (#1,#2) rectangle ++(1,1);
  \fill[pattern=north east lines, pattern color=tthree!25] (#1,#2) rectangle ++(1,1);
  \draw[tthree!40, line width=0.5pt] (#1,#2) rectangle ++(1,1);}

\newcommand{\CellPiggyD}[2]{%
  \fill[tfour!8]   (#1,#2) rectangle ++(1,1);
  \fill[pattern=north east lines, pattern color=tfour!25]  (#1,#2) rectangle ++(1,1);
  \draw[tfour!40,  line width=0.5pt] (#1,#2) rectangle ++(1,1);}

\newcommand{\CellPiggyE}[2]{%
  \fill[tfive!8]   (#1,#2) rectangle ++(1,1);
  \fill[pattern=north east lines, pattern color=tfive!25]  (#1,#2) rectangle ++(1,1);
  \draw[tfive!40,  line width=0.5pt] (#1,#2) rectangle ++(1,1);}

\begin{tikzpicture}[x=\CW cm, y=\CH cm]

  \pgfmathsetmacro{\PANELCM}{\NCOLS*\CW + \NGAP}
  \pgfmathsetmacro{\PANELGU}{\NCOLS + \NGAP/\CW}

  \foreach \r in {0,...,7} { \CellA{0}{\r} }
  \foreach \r in {0,...,7} { \CellB{1}{\r} }
  \foreach \r in {0,...,7} { \CellC{2}{\r} }
  \foreach \r in {0,...,7} { \CellD{3}{\r} }
  \foreach \r in {0,...,7} { \CellE{4}{\r} }

  \foreach \r/\l in {0/1, 1/2, 2/3, 3/4, 4/5, 5/6, 6/7, 7/8} {
    \node[font=\scriptsize, anchor=east] at (-0.08, \r+0.5) {$L_{\l}$};
  }

  \node[font=\scriptsize, anchor=north] at (0.5, -0.2) {$x_1$};
  \node[font=\scriptsize, anchor=north] at (1.5, -0.2) {$x_2$};
  \node[font=\scriptsize, anchor=north] at (2.5, -0.2) {$x_3$};
  \node[font=\scriptsize, anchor=north] at (3.5, -0.2) {$x_4$};
  \node[font=\scriptsize, anchor=north] at (4.5, -0.2) {$x_5$};

  \node[font=\small\bfseries, anchor=south] at (2.5, 8.7) {Standard AR Transformer};

  \begin{scope}[xshift=\PANELCM cm]

    \foreach \r in {0,...,7} { \CellA{0}{\r} }

    \foreach \r in {0,...,3} { \CellB{1}{\r} }
    \foreach \r in {4,...,7} { \Cskip{1}{\r} }

    \foreach \r in {0,...,5} { \CellC{2}{\r} }
    \foreach \r in {6,7}     { \Cskip{2}{\r} }

    \CellD{3}{0}  \CellD{3}{1}
    \foreach \r in {2,...,7} { \Cskip{3}{\r} }

    \foreach \r in {0,...,7} { \CellE{4}{\r} }

    \node[font=\scriptsize, anchor=north] at (0.5, -0.2) {$x_1$};
    \node[font=\scriptsize, anchor=north] at (1.5, -0.2) {$x_2$};
    \node[font=\scriptsize, anchor=north] at (2.5, -0.2) {$x_3$};
    \node[font=\scriptsize, anchor=north] at (3.5, -0.2) {$x_4$};
    \node[font=\scriptsize, anchor=north] at (4.5, -0.2) {$x_5$};

    \node[font=\fontsize{3.8pt}{4.5pt}\selectfont, color=red!65,         anchor=south] at (0.5, 8.1) {Full};
    \node[font=\fontsize{3.8pt}{4.5pt}\selectfont, color=green!50!black, anchor=south] at (1.5, 8.1) {Exit\hspace{0.3pt}$L_4$};
    \node[font=\fontsize{3.8pt}{4.5pt}\selectfont, color=green!50!black, anchor=south] at (2.5, 8.1) {Exit\hspace{0.3pt}$L_6$};
    \node[font=\fontsize{3.8pt}{4.5pt}\selectfont, color=green!50!black, anchor=south] at (3.5, 8.1) {Exit\hspace{0.3pt}$L_2$};
    \node[font=\fontsize{3.8pt}{4.5pt}\selectfont, color=red!65,         anchor=south] at (4.5, 8.1) {Full};

    \node[font=\small\bfseries, anchor=south] at (2.5, 8.7) {Early Exit};

    \node[font=\fontsize{4.5pt}{5pt}\selectfont, color=black!35]
      at (2.5, 7.0) {copied};

  \end{scope}

  \pgfmathsetmacro{\RPANELCM}{2*\PANELCM}

  \begin{scope}[xshift=\RPANELCM cm]

    \foreach \r in {0,...,7} { \CellA{0}{\r} }

    \foreach \r in {0,...,3} { \CellB{1}{\r} }
    \CellPiggyC{1}{4}  \CellPiggyC{1}{5}
    \CellPiggyE{1}{6}  \CellPiggyE{1}{7}

    \foreach \r in {0,...,5} { \CellC{2}{\r} }
    \CellPiggyE{2}{6}  \CellPiggyE{2}{7}

    \CellD{3}{0}  \CellD{3}{1}
    \foreach \r in {2,...,7} { \CellPiggyE{3}{\r} }

    \foreach \r in {0,...,7} { \CellE{4}{\r} }

    \node[font=\scriptsize, anchor=north] at (0.5, -0.2) {$x_1$};
    \node[font=\scriptsize, anchor=north] at (1.5, -0.2) {$x_2$};
    \node[font=\scriptsize, anchor=north] at (2.5, -0.2) {$x_3$};
    \node[font=\scriptsize, anchor=north] at (3.5, -0.2) {$x_4$};
    \node[font=\scriptsize, anchor=north] at (4.5, -0.2) {$x_5$};

    \node[font=\fontsize{3.8pt}{4.5pt}\selectfont, color=red!65,         anchor=south] at (0.5, 8.1) {Full};
    \node[font=\fontsize{3.8pt}{4.5pt}\selectfont, color=green!50!black, anchor=south] at (1.5, 8.1) {Exit\hspace{0.3pt}$L_4$};
    \node[font=\fontsize{3.8pt}{4.5pt}\selectfont, color=green!50!black, anchor=south] at (2.5, 8.1) {Exit\hspace{0.3pt}$L_6$};
    \node[font=\fontsize{3.8pt}{4.5pt}\selectfont, color=green!50!black, anchor=south] at (3.5, 8.1) {Exit\hspace{0.3pt}$L_2$};
    \node[font=\fontsize{3.8pt}{4.5pt}\selectfont, color=red!65,         anchor=south] at (4.5, 8.1) {Full};

    \node[font=\fontsize{4.5pt}{5pt}\selectfont, color=tthree!75!black,
          align=center] at (1.5, 5.0) {Via\\$x_3$};
    \node[font=\fontsize{4.5pt}{5pt}\selectfont, color=tfive!75!black,
          align=center] at (3.5, 5.0) {Via\\$x_5$};

    \node[font=\small\bfseries, anchor=south] at (2.5, 8.7) {N-vium (ours)};

  \end{scope}

  \pgfmathsetmacro{\FIGCTR}{(\PANELGU + \PANELGU + \NCOLS)/2}

  \def\LSQ{0.52}
  \def\LEGY{-1.45}
  \pgfmathsetmacro{\LSQhalf}{\LSQ/2}

  \pgfmathsetmacro{\LXCOMP}{\FIGCTR - 5.0}
  \pgfmathsetmacro{\LXSKIP}{\FIGCTR - 0.26}
  \pgfmathsetmacro{\LXPIGGY}{\FIGCTR + 2.4}

  \pgfmathsetmacro{\CsA}{\LXCOMP}
  \pgfmathsetmacro{\CsB}{\LXCOMP + 0.42}
  \pgfmathsetmacro{\CsC}{\LXCOMP + 0.84}
  \pgfmathsetmacro{\CsD}{\LXCOMP + 1.26}
  \pgfmathsetmacro{\CsE}{\LXCOMP + 1.68}

  \fill[tone!30]   (\CsA,\LEGY-\LSQhalf) rectangle ++(0.35,\LSQ);
  \draw[tone!70,   line width=0.3pt] (\CsA,\LEGY-\LSQhalf) rectangle ++(0.35,\LSQ);

  \fill[ttwo!30]   (\CsB,\LEGY-\LSQhalf) rectangle ++(0.35,\LSQ);
  \draw[ttwo!70,   line width=0.3pt] (\CsB,\LEGY-\LSQhalf) rectangle ++(0.35,\LSQ);

  \fill[tthree!30] (\CsC,\LEGY-\LSQhalf) rectangle ++(0.35,\LSQ);
  \draw[tthree!70, line width=0.3pt] (\CsC,\LEGY-\LSQhalf) rectangle ++(0.35,\LSQ);

  \fill[tfour!30]  (\CsD,\LEGY-\LSQhalf) rectangle ++(0.35,\LSQ);
  \draw[tfour!70,  line width=0.3pt] (\CsD,\LEGY-\LSQhalf) rectangle ++(0.35,\LSQ);

  \fill[tfive!25]  (\CsE,\LEGY-\LSQhalf) rectangle ++(0.35,\LSQ);
  \draw[tfive!65,  line width=0.3pt] (\CsE,\LEGY-\LSQhalf) rectangle ++(0.35,\LSQ);

  \pgfmathsetmacro{\LXcompTXT}{\LXCOMP + 2.15}
  \node[font=\tiny, anchor=west] at (\LXcompTXT, \LEGY) {Computed};

  \fill[skipfill]  (\LXSKIP,\LEGY-\LSQhalf) rectangle ++(0.35,\LSQ);
  \draw[skipborder,line width=0.3pt] (\LXSKIP,\LEGY-\LSQhalf) rectangle ++(0.35,\LSQ);
  \pgfmathsetmacro{\LXskipTXT}{\LXSKIP + 0.47}
  \node[font=\tiny, anchor=west] at (\LXskipTXT, \LEGY) {Skipped};

  \pgfmathsetmacro{\PsA}{\LXPIGGY}
  \pgfmathsetmacro{\PsB}{\LXPIGGY + 0.42}

  \fill[tthree!8]  (\PsA,\LEGY-\LSQhalf) rectangle ++(0.35,\LSQ);
  \fill[pattern=north east lines, pattern color=tthree!25] (\PsA,\LEGY-\LSQhalf) rectangle ++(0.35,\LSQ);
  \draw[tthree!40, line width=0.3pt] (\PsA,\LEGY-\LSQhalf) rectangle ++(0.35,\LSQ);

  \fill[tfive!8]   (\PsB,\LEGY-\LSQhalf) rectangle ++(0.35,\LSQ);
  \fill[pattern=north east lines, pattern color=tfive!25]  (\PsB,\LEGY-\LSQhalf) rectangle ++(0.35,\LSQ);
  \draw[tfive!40,  line width=0.3pt] (\PsB,\LEGY-\LSQhalf) rectangle ++(0.35,\LSQ);

  \pgfmathsetmacro{\LXpiggyTXT}{\LXPIGGY + 0.89}
  \node[font=\tiny, anchor=west] at (\LXpiggyTXT, \LEGY) {Deferred};

\end{tikzpicture}
  \caption{Comparison of inference strategies over 5 tokens with an 8-layer network.
    \textbf{Standard}: every token traverses all 40 layers sequentially.
    \textbf{Early exit} (e.g., CALM~\citep{schuster2022confident}): tokens exit at different depths reducing latency to 28 layer units. Missing KV entries are copied from the last available state. The sampled tokens and cache are approximate.
    \textbf{N-vium} (ours): tokens learn to exit, also reducing latency but are sampled from the exact $\pi_{\mathrm{mix}}$ distribution. Deferred layers are computed on demand with future tokens in parallel. We denote this as piggybacking as the KV cache is complete, exact and consistent with the sampled predictions.}
  \label{fig:piggybacking}
\end{figure}

\section{Method}

N-vium augments a standard decoder backbone of $L$ layers with $\nv$ exits.
The backbone is partitioned into equal $\nv$ blocks $b_1, \ldots, b_{\nv}$, with junction layers placed evenly at $\ell_k = k \cdot L/\nv$ for $k = 1, \ldots, \nv$, so that block $b_k$ spans layers $\ell_{k-1}{+}1$ through $\ell_k$ (where $\ell_0 = 0$).
Each junction branches off the residual stream at $\mathbf{h}_t^{(\ell_k)}$ and gives the model the option to produce a next-token distribution $\pi_{k}$ without completing the remaining layers. Each junction operates on the hidden state $\mathbf{h}_t^{(\ell_k)}$ and consists of three components.
A \emph{router} MLP accepts $\mathbf{h}_t^{(\ell_k)}$ and produces $w_k(h_t^{(\ell_k)}) \in [0,1]$, the conditional probability of token $x_t$ sampling at junction $k$ given the current hidden state. By convention $w_{\nv} = 1$. The joint mixture-of-exit token distribution $\pi_{\mathrm{mix}}$ is a weighted sum of all $\pi_k$ according to the sampling probability $p_k$.

\begin{equation*}
  p_k(x_t) = w_k(h_t^{(\ell_k)}) \prod_{j=1}^{k-1}(1 - w_j(h_t^{(\ell_j)}))    \qquad
  \pi_{\mathrm{mix}}(x_{t+1} \mid x_{1:t}) = \sum_{k=1}^{\nv} p_k(x_t)\, \pi_k(x_{t+1} \mid x_{1:t})
\end{equation*}

If a token decides to sample at the current junction, an \emph{adapter} MLP transforms $\mathbf{h}_t^{(\ell_k)}$ into a representation $o_k$ aligned with the geometry expected by the shared projection matrix $W_{\mathrm{lm}}$.
The junction distribution $\pi_k(\,\cdot \mid x_{1:t})$ is then calculated as $\text{softmax}(o_k W_{\mathrm{lm}})$ of the adapted state using the language modeling head. Note that all sampling junctions share $W_{\mathrm{lm}}$ with the final head. The lightweight adapters allow each junction to specialize without a separate full-size projection.
A complete overview of the architecture is shown in Figure~\ref{fig:overview}.

The number of extra parameters introduced by the routers and adapters depends on the depth and width of the model and becomes negligible as the model scales. Across all models considered in this work, the average additional parameter overhead is 3.3\%, making it a lightweight addition. All dense baselines throughout this paper are parameter-matched to the corresponding N-vium model.

\subsection{Training}
\label{sec:training}

\begin{wrapfigure}[14]{r}{0.48\linewidth}
  \vspace{-2.2\baselineskip}
  \begin{minipage}{\linewidth}
  \begin{algorithm}[H]
  \caption{N-vium inference sampling.}
  \label{alg:decoding}
  \begin{algorithmic}[1]
  \Require junctions at layers $\ell_k = k \cdot \frac{L}{\nv}$\\
  $k \in \{1, \ldots, \nv\}$, $\ell_0 = 0$, $w_{\nv} = 1$
  \For{each generation step $t$}
    \For{$k = 1, \ldots, \nv$}
      \State Run Layers $\ell_{k-1}{+}1, \ldots, \ell_k$
      \State $b_t^{(k)} \sim \mathrm{Bernoulli}(w_k(h_t^{(\ell_k)}))$
      \If{$b_t^{(k)} = 1$} \Comment{junction $k$}
        \State $x_{t+1} \sim \pi_k(\,\cdot \mid x_{1:t})$
        \State \textbf{break}
      \EndIf
    \EndFor
  \EndFor
  \end{algorithmic}
  \end{algorithm}
  \end{minipage}
  \vspace{-4pt}
\end{wrapfigure}

The main training objective for N-vium-based transformer architectures also follows the standard transformer setup of predicting the next token and minimizes the cross-entropy loss. However, rather than a single distribution induced by the last layer state, we optimize the joint $\pi_{\mathrm{mix}}$ distribution. Note that this joint distribution is a weighted combination of intermediate predictions $\pi_k$. As such, we directly optimize the mixture  distribution from which we also sample at inference time. Compare this to the training objective in the early exit literature \citep{schuster2022confident}, which instead optimizes each intermediate distribution in isolation to match the underlying data.

 We introduce an auxiliary compute penalty that optimizes the expected normalized layer depth per token. This is intended to give fine-grained control and steer the model towards more efficient modes by shifting probability mass towards earlier predictions. 

\begin{equation*}
  \mathcal{L}_{\mathrm{mix}} = -\mathbb{E}_t\!\left[\log \pi_{\mathrm{mix}}(x_{t+1} \mid x_{1:t})\right] \quad \mathcal{L}_{\mathrm{compute}} = \mathbb{E}_t\!\left[\sum_{k=1}^{\nv} p_k(x_t)\,\frac{\ell_k}{L} \right]
  \label{eq:lmix}
\end{equation*}
The total loss objective consists of $\mathcal{L}_{\mathrm{main}} = \mathcal{L}_{\mathrm{mix}} + \beta\,\mathcal{L}_{\mathrm{compute}}$. Note that at $\beta=0$, we recover the pure next token optimization. To stabilize the routing distributions at initial training, we adapt the warm-up phase as detailed in the Appendix~\ref{app:impl}.
 Intuitively, the routers have to learn to assign appropriate weights to the intermediate predictions, so that the joint distribution is optimized for the next token under the desired model efficiency. Note that the architecture and training modifications are minimal to compute the $\pi_{\mathrm{mix}}$ distribution and therefore remain fully compatible with the standard training pipelines to properly utilize parallelized training. We continue the theoretical analysis in the Appendix~\ref{app:theory}.

\subsection{Inference}
\label{sec:inference}

To sample from the model at inference, we can either explicitly construct the $\pi_{\mathrm{mix}}$ distribution or make use of a mathematical equivalent formulation using a two step sampling approach to sample from the correct $\pi_{\mathrm{mix}}$ at intermediate predictions. At each junction $k$ we sample from $\mathrm{Bernoulli}(w_k(\mathbf{h}_t^{(\ell_k)}))$ determined by the routing probability. We then stop the current computation and draw a sample $x_{t+1}$ from $\pi_k(\cdot \mid x_{1:t})$ or continue the computation. Due to the two-step sampling according to $w_k$, the sample $x_{t+1}$ is actually sampled from the $\pi_{\mathrm{mix}}$ distribution for which we provide further details in Appendix~\ref{app:proof-multi}. 
The Algorithm~\ref{alg:decoding} outlines the full sampling procedure. 

Recall that the role of the full forward pass is two-fold. We want to draw the sample for the next token prediction while also building up proper KV representations that future tokens can attend to. To do this efficiently, we defer the computation of later KV representations once we decide to exit using an intermediate token prediction junction $k$. Instead, we rely on a technique, we name \textit{piggybacking}, to fill in the missing layers on demand when computing the forward pass for future tokens.  

Piggybacking is inspired by the parallel verification step in speculative decoding~\citep{stern2018blockwise, leviathan2023fast}.
Rather than computing missing states immediately, deferred tokens are grouped by their exit depth. As a future token's forward pass reaches each successive depth, all deferred tokens that terminated at that depth are \emph{piggybacked}: their hidden states are concatenated with the current token's hidden state and processed through the remaining layers in a parallel pass.
This fills in the missing layer states $s_{t'}^{(\ell)}$ for all deferred tokens $t' < t$ and all $\ell > \ell^*_{t'}$, at no additional sequential cost.
A similar technique SPD is used by~\citet{bae2023fast} to recover KV caches for a two-exit architecture. However, piggybacking generalizes naturally to any number of $\nv$ exit points, and the later computed hidden KV representations remain fully compatible with the sampled token. In SPD the final token representations, which determine KV, can induce a different next token distribution due to the approximate nature of the early sampling process.
Figure~\ref{fig:piggybacking} illustrates the difference between standard forward pass, early exit, and piggybacking.

As a result, the total backbone \emph{computation is identical} to that of a dense model. Each token eventually passes through all $L$ layers.
The speedup comes from reordering sequential computation into parallel batched passes in a learned fashion, exploiting GPU parallelism in the same way as the prefill phase \textit{without any  additional} or potential wasteful additional computation.

\begin{figure}[t!]
  \centering
  \definecolor{plotBlue}{RGB}{60,114,180}
\definecolor{plotRed} {RGB}{196, 52, 40}

\def\pLabelXShift{-9pt}
\def\lblSixAnchor   {north}
\def\lblSixXShift   {0pt}
\def\lblSixYShift   {14pt}

\def\lblTwelveAnchor{south}
\def\lblTwelveXShift{4pt}
\def\lblTwelveYShift{4pt}

\def\lblEighteenAnchor{south}
\def\lblEighteenXShift{0pt}
\def\lblEighteenYShift{5pt}

\def\lblTwentyfourAnchor{south}
\def\lblTwentyfourXShift{0pt}
\def\lblTwentyfourYShift{4pt}

\def\lblMixAnchor   {south west}
\def\lblMixXShift   {0pt}
\def\lblMixYShift   {4pt}

  \begin{subfigure}[t]{0.47\textwidth}
    \centering
    \begin{tikzpicture}
      \begin{axis}[
        width            = \linewidth,
        height           = 5.0cm,
        ybar,
        bar width        = 7pt,
        xmin             = 2,    xmax = 28,
        ymin             = 0,    ymax = 1.10,
        xtick            = {6, 12, 18, 24},
        xlabel           = {Sampling layer},
        xlabel style     = {font=\small},
        ylabel           = {Routing probability},
        ylabel style     = {font=\small},
        yticklabel style = {font=\footnotesize},
        xticklabel style = {font=\footnotesize},
        axis lines*      = left,
        grid             = both,
        major grid style = {gray!25, line width=0.4pt},
        minor grid style = {gray!12, line width=0.3pt},
        tick align       = outside,
        legend style     = {
          at             = {(0.05, 0.97)},
          anchor         = north west,
          font           = \footnotesize,
          cells          = {anchor=west},
          draw           = gray!40,
          fill           = white,
          fill opacity   = 0.9,
          text opacity   = 1,
          inner sep      = 2pt,
        },
        legend image code/.code={%
          \draw[#1] (0cm,-0.1cm) rectangle (0.25cm,0.15cm);
        },
      ]
        \addplot [fill=plotBlue!30, draw=plotBlue!70, line width=0.5pt] coordinates {
          (6,  0.145703)
          (12, 0.259934)
          (18, 0.344640)
          (24, 0.246770)
        };
        \addlegendentry{Marginal}

        \addplot [fill=plotBlue!70, draw=plotBlue, line width=0.5pt] coordinates {
          (6,  0.145703)
          (12, 0.405637)
          (18, 0.750277)
          (24, 1.000000)
        };
        \addlegendentry{Cumulative}

        \addplot [fill=plotRed!50, draw=plotRed, line width=0.5pt] coordinates {
          (24, 1.000000)
        };
        \addlegendentry{Baseline}

        \node [font=\footnotesize, anchor=south, xshift=\pLabelXShift] at (axis cs:6,  0.145703) {$p_1$};
        \node [font=\footnotesize, anchor=south, xshift=\pLabelXShift] at (axis cs:12, 0.259934) {$p_2$};
        \node [font=\footnotesize, anchor=south, xshift=\pLabelXShift] at (axis cs:18, 0.344640) {$p_3$};
        \node [font=\footnotesize, anchor=south, xshift=\pLabelXShift] at (axis cs:24, 0.246770) {$p_4$};
      \end{axis}
    \end{tikzpicture}
    \caption{Routing distribution.}
  \end{subfigure}
  \hfill
  \begin{subfigure}[t]{0.47\textwidth}
    \centering
    \begin{tikzpicture}
      \begin{axis}[
        width            = \linewidth,
        height           = 5.0cm,
        xmin             = 2,   xmax = 28,
        ymin             = 0,   ymax = 380,
        xtick            = {6, 12, 18, 24},
        xlabel           = {Sampling layer},
        xlabel style     = {font=\small},
        ylabel           = {Perplexity},
        ylabel style     = {font=\small},
        yticklabel style = {font=\footnotesize},
        xticklabel style = {font=\footnotesize},
        axis lines*      = left,
        grid             = both,
        major grid style = {gray!25, line width=0.4pt},
        minor grid style = {gray!12, line width=0.3pt},
        tick align       = outside,
        legend style     = {
          at             = {(0.97, 0.97)},
          anchor         = north east,
          legend columns = 1,
          font           = \tiny,
          cells          = {anchor=west},
          draw           = gray!40,
          fill           = white,
          fill opacity   = 0.9,
          text opacity   = 1,
          inner sep      = 1.5pt,
          row sep        = -1pt,
        },
      ]
        \addplot [plotBlue, thick, mark=*, mark size=2.2pt, line width=1.2pt] coordinates {
          (6,  312.6)
          (12,  62.6)
          (18,  48.4)
          (24, 161.0)
        };
        \addlegendentry{$\pi_k$ PPL}

        \addplot [plotBlue!50, dashed, line width=1.0pt] coordinates {
          (2, 18.8)(28, 18.8)
        };
        \addlegendentry{$\pi_{\mathrm{mix}}$ PPL}

        \addplot [plotRed!60, densely dotted, line width=1.0pt] coordinates {
          (2, 19.46)(28, 19.46)
        };
        \addlegendentry{Baseline PPL}

        \node [font=\scriptsize, color=plotRed!60,
               anchor=south east,
               xshift=0pt, yshift=3pt]
          at (axis cs:28, 19.46) {$19.46$};

        \node [font=\scriptsize, color=plotBlue,
               anchor=\lblSixAnchor,
               xshift=\lblSixXShift, yshift=\lblSixYShift]
          at (axis cs:6, 312.6) {$312.6$};

        \node [font=\scriptsize, color=plotBlue,
               anchor=\lblTwelveAnchor,
               xshift=\lblTwelveXShift, yshift=\lblTwelveYShift]
          at (axis cs:12, 62.6) {$62.6$};

        \node [font=\scriptsize, color=plotBlue,
               anchor=\lblEighteenAnchor,
               xshift=\lblEighteenXShift, yshift=\lblEighteenYShift]
          at (axis cs:18, 48.4) {$48.4$};

        \node [font=\scriptsize, color=plotBlue,
               anchor=\lblTwentyfourAnchor,
               xshift=\lblTwentyfourXShift, yshift=\lblTwentyfourYShift]
          at (axis cs:24, 161.0) {$161.0$};

        \node [font=\scriptsize, color=plotBlue!60,
               anchor=\lblMixAnchor,
               xshift=\lblMixXShift, yshift=\lblMixYShift]
          at (axis cs:2, 18.8) {$18.8$};
      \end{axis}
    \end{tikzpicture}
    \caption{Per-head perplexity.}
  \end{subfigure}
  \caption{Routing distribution and per-head perplexity for the trained
    4-vium model.
    \textbf{\\(a)} Marginal sampling probabilities $p_1, p_2, p_3, p_4$ and their cumulative sums at each junction.
    \textbf{(b)} The dashed line marks the mixture perplexity,
    which lies well below every individual head, confirming that routing
    learns to create a proper mixed distribution, rather than individual sampling heads.}
  \label{fig:exit-analysis}
\end{figure}

\section{Experiments}

We evaluate N-vium across multiple axes: We analyze the mechanisms of the trained model, provide a depth-width scaling study, a downstream benchmark transfer study via supervised fine-tuning, and a comparison against existing methods. Unless stated otherwise, all pretraining experiments use a hidden dimension of $768$ using $24$ layers and report perplexity differences relative to a parameter and data-matched baseline trained under identical conditions.

\paragraph{Datasets and baselines.}
All models are pretrained on the C4 dataset~\citep{raffel2020exploring} using the
LLaMA~2 architecture and tokenizer~\citep{touvron2023llama2}.
Each training uses $D=20N$ tokens for a given parameter count $N$,
following the regime of \citet{hoffmann2022training}.
The matched baseline in every experiment is a standard dense decoder-only transformer with the
same depth, slightly increased MLP width to match the additional parameters of N-vium and are trained with the cross-entropy loss on
the final head only.

\paragraph{Metrics and Speedup measurement.}
We report \emph{a relative difference in perplexity} (\%) on the matched baseline as the
primary pretraining metric, evaluated on a held-out C4 validation set of $5$M tokens:
positive values indicate that the mixture model achieves lower perplexity than the baseline counterpart. 
In addition, we measured wall-time inference speedup on a single A100 GPU using SpecBench~\citep{xia2024specbench}, a benchmark covering six prompt categories, averaged over three runs. We report a relative improvement in \emph{latency per token} (ms/token). Full implementation details, hyperparameters, and speedup measurement details are provided in Appendix~\ref{app:impl}.

\paragraph{Routing analysis.}
First, we validate whether the trained N-vium transformer exhibits the expected mixture-of-exits learned routing behavior. In Figure \ref{fig:exit-analysis} we can see that the Quadrivium model on 24 layers learns to allocate a significant portion of the probability mass towards earlier exits. Moreover, we verify that each exit contributes distinctively to the joint mixture rather than collapsing to the same approximate distribution. To this end, we evaluate the perplexity of each individual exit $\pi_k$ relative to the joint mixture $\pi_{\mathrm{mix}}$. Individual exits exhibit substantially higher perplexity than the joint mixture, confirming that exits learn complementary roles and that the mixture-of-exits training objective is effective.

\label{sec:exp-depth}

\paragraph{Depth-width scaling.}
We train Quadrivium models across two orthogonal axes of scale.
For depth scaling, we fix the hidden dimension at $768$ and vary the number of layers from $6$ to $48$.
For width scaling, we fix the depth at $24$ layers and vary the hidden dimension from $128$ to $1536$.
As Table~\ref{tab:depth-scaling} shows, we achieve consistent speedups across the entire model scale at no significant degradation in perplexity quality.

\pgfplotstableread{
layers  params  baselinece  modelce  speedupth  speedup
6       99      3.332       3.319    33.3        -8.3
12      141     3.135       3.104    41.4        10.9
24      226     2.939       2.902    48.0        26.8
36      311     2.826       2.788    53.6        36.4
48      396     2.769       2.720    58.2        41.6
}\depthdata

\pgfplotstableread{
width  params  baselinece  modelce   speedupth  speedup
128    13      4.357       4.188     24.3        53.8
256    36      3.579       3.524     62.3        35.3
512    112     3.103       3.058     45.3        25.0
768    226     2.939       2.902     48.5        26.8
1024   381     2.862       2.840     43.4        23.8
1536   806     2.59119     2.59073   56.7        32.0
}\widthdata

\begin{table}[t]
  \caption{Depth and width scaling results with $\beta = 0.15$.
    $\Delta$\,(\%) is the relative perplexity difference over the parameter-matched dense baseline. Positive values indicate lower perplexity.
    Speedup is the measured percentage reduction in latency per token over the dense baseline, measured on a single A100 GPU.}
  \label{tab:depth-scaling}

\newcommand{\depthrow}[1]{%
  \pgfplotstablegetelem{#1}{layers}\of{\depthdata}\xdef\rowkey{\pgfplotsretval}%
  \pgfplotstablegetelem{#1}{params}\of{\depthdata}\xdef\rowparams{\pgfplotsretval}%
  \pgfplotstablegetelem{#1}{baselinece}\of{\depthdata}\xdef\rowbce{\pgfplotsretval}%
  \pgfplotstablegetelem{#1}{modelce}\of{\depthdata}\xdef\rowmce{\pgfplotsretval}%
  \pgfplotstablegetelem{#1}{speedup}\of{\depthdata}\xdef\rowspeedup{\pgfplotsretval}%
  \rowkey & \rowparams
    & \pgfmathparse{exp(\rowbce)}\pgfmathprintnumber[fixed,precision=2]{\pgfmathresult}
    & \pgfmathparse{exp(\rowmce)}\pgfmathprintnumber[fixed,precision=2]{\pgfmathresult}
    & \pgfmathparse{(exp(\rowbce)-exp(\rowmce))/exp(\rowbce)*100}\pgfmathprintnumber[fixed,precision=1]{\pgfmathresult}\%
    & $\rowspeedup$\% \\%
}

\newcommand{\widthrow}[1]{%
  \pgfplotstablegetelem{#1}{width}\of{\widthdata}\xdef\rowkey{\pgfplotsretval}%
  \pgfplotstablegetelem{#1}{params}\of{\widthdata}\xdef\rowparams{\pgfplotsretval}%
  \pgfplotstablegetelem{#1}{baselinece}\of{\widthdata}\xdef\rowbce{\pgfplotsretval}%
  \pgfplotstablegetelem{#1}{modelce}\of{\widthdata}\xdef\rowmce{\pgfplotsretval}%
  \pgfplotstablegetelem{#1}{speedup}\of{\widthdata}\xdef\rowspeedup{\pgfplotsretval}%
  \rowkey & \rowparams
    & \pgfmathparse{exp(\rowbce)}\pgfmathprintnumber[fixed,precision=2]{\pgfmathresult}
    & \pgfmathparse{exp(\rowmce)}\pgfmathprintnumber[fixed,precision=2]{\pgfmathresult}
    & \pgfmathparse{(exp(\rowbce)-exp(\rowmce))/exp(\rowbce)*100}\pgfmathprintnumber[fixed,precision=1]{\pgfmathresult}\%
    & $\rowspeedup$\% \\%
}

\centering\small
\begin{tabularx}{\linewidth}{Y Y Y Y Y Y}
  \toprule
  \textbf{Layers}
    & \textbf{Params (M)}
    & \textbf{Baseline PPL}
    & \textbf{4-vium PPL}
    & \textbf{$\Delta$ PPL (\%)}
    & \textbf{Speedup (\%)} \\
  \midrule
  \multicolumn{6}{l}{\textit{Depth scaling \textnormal{(fixed 768 width)}}} \\[2pt]
  \depthrow{0}
  \depthrow{1}
  \depthrow{2}
  \depthrow{3}
  \depthrow{4}
  \midrule
  \multicolumn{6}{l}{\textit{Width scaling \textnormal{(fixed 24 layers)}}} \\[2pt]
  \widthrow{0}
  \widthrow{1}
  \widthrow{2}
  \widthrow{3}
  \widthrow{4}
  \widthrow{5}
  \bottomrule
\end{tabularx}

\end{table}

\newcommand{\archlegendskip}{-40pt}
\newcommand{\archcaptionskip}{-10pt}
\newcommand{\archmarksize}{1.5pt}

\label{sec:exp-comparison}

\paragraph{Comparison with existing methods.}
To contextualize the quality-speed trade-offs of our approach, we compare with two representative baselines, CALM~\citep{schuster2022confident} and LayerSkip~\citep{elhoushi2024layerskip}.
We find that applying CALM-style training in a continual pretraining setting yields significant perplexity degradation, consistent with prior work~\citep{delcorro2023skipdecode, bae2023fast}. Instead, we train it from scratch on a decoder-only LLaMA backbone, referring to this variant as CALM$^*$.
For LayerSkip, the layer dropout applied during training introduces a slight perplexity degradation. At inference time, it uses self-speculative decoding from the final head's distribution, making it lossless with respect to its own training distribution.
Both models are paired with a parameter- and data-matched dense baseline trained under identical conditions.
We do not include FREE~\citep{bae2023fast} as a separate baseline because, under pretraining from scratch, it is analogous to CALM$^*$.

Figure~\ref{fig:comparison-pareto} summarizes all three methods on a Pareto perplexity-speed plot.
N-vium points occupy the upper-right quadrant (no perplexity degradation \emph{and} faster), CALM$^*$ points fall in the lower-right quadrant (faster but worse perplexity), and LayerSkip points lie on a constant perplexity line below the baseline.

\paragraph{Scaling to 1.5B parameters and downstream benchmarks.} We additionally train a 1.5B parameter Quadrivium model where we maximize speedup using an adaptive $\beta$ schedule. The complete setup and results are described in Appendix~\ref{sec:adaptive-beta}.
We also evaluate the models on downstream benchmarks after supervised fine-tuning to verify that the model remains steerable. The results are reported in Appendix~\ref{app:extended-empirical}.

\section{Discussion}

\paragraph{Training FLOP overhead.}
At training time, N-vium uses more FLOPs per forward pass than a parameter-matched dense baseline: computing $\pi_{\mathrm{mix}}$ evaluates the head projection at every sampling junction rather than once at the final layer.
The speedup and perplexity figures reported throughout this paper are measured against a dense baseline with the same parameter count and token budget but fewer training FLOPs. Note that N-vium does not introduce extra trainable parameters over this baseline and achieves higher inference efficiency. If instead the baseline depth is adjusted to match N-vium's expected inference speed at the same training compute budget, N-vium is still favored. We provide a more detailed explanation in Appendix~\ref{sec:exp-isoflop}.
At inference time, the head is evaluated only once for sampling, matching the total baseline FLOPs.
For models deployed in production, a higher training compute budget for N-vium is justified since inference costs dominate the overall cost over the model lifetime.
Moreover, assuming a fixed number of sampling junctions, the relative training FLOP overhead diminishes as the model scales, making this tradeoff increasingly favorable at larger model sizes.

\paragraph{Limitations.}
Batched inference strategies must be adjusted for N-vium, as tokens can dynamically sample the next token at intermediate junctions, and face similar challenges to early-exit and MoD methods~\citep{raposo2024mixture, kwon2023efficient}.
Second, training N-vium requires additional loss terms, and achieving speedup requires tuning the $\beta$ hyperparameter.
Although we provide initial evidence of the feasibility of N-vium up to 1.5B, more engineering and research is required to quantify the behavior of the additional beta hyperparameter and scale to larger sizes.

\paragraph{Future work.}
N-vium is a lightweight extension to the decoder transformer architecture and, as such, orthogonal to commonly applied techniques, such as speculative decoding or reinforcement learning~\citep{ouyang2022instruct, shao2024deepseekmath}, which can open future research directions.

\section{Conclusion}
We present N-vium, a mixture-of-exits technique that strictly generalizes the standard transformer by introducing learned routing over multiple sampling heads.
Unlike most inference efficiency methods that improve tokens per second by reducing FLOPs per token, N-vium targets a different axis: it increases FLOPs per second by reorganizing when computation happens, keeping the hardware better utilized without discarding any computation. We present the theoretical framework for early sampling transformer and propose a concrete training pipeline for N-vium-based models.
We empirically analyze the mixture distribution and show that the intermediate heads learn to contribute meaningfully to the joint prediction rather than collapsing to the same approximate distribution as the final head.
We train and evaluate a series of Quadrivium models up to the 1.5B scale, demonstrating that mixture-of-exit sampling can achieve a wall-clock speedup of up to 57\%.

\clearpage
\bibliographystyle{abbrvnat}
\bibliography{references}

\appendix
\clearpage
\raggedbottom

\section{Implementation Details}
\label{app:impl}

\paragraph{Training schedule.}
Optimizing $\mathcal{L}_{\mathrm{mix}}$ without any routing regularization leads to router collapse, where all tokens are routed to the same exit.
We therefore split training into two phases.

In the first $5\%$ of the training steps, which we call the \emph{router warmup phase}, we add a balancing term $\mathcal{L}_{\mathrm{balance}}$ that penalizes each router for deviating from its uniform target, ensuring that all exits and the backbone develop meaningful representations before routing specializes:
\begin{equation}
  \mathcal{L}_{\mathrm{warmup}} = \mathcal{L}_{\mathrm{mix}} + \alpha\,\mathcal{L}_{\mathrm{balance}},
  \qquad
  \mathcal{L}_{\mathrm{balance}} = \mathbb{E}_t\!\left[\sum_{k=1}^{\nv-1}\!\left(w_k(h_t^{(\ell_k)}) - \frac{1}{\nv-k+1}\right)^{\!2}\right].
\end{equation}
Setting $w_k(h_t^{(\ell_k)}) = 1/(\nv-k+1)$ yields $p_k(x_t) = 1/\nv$ for all $k$, i.e.\ equal expected usage of every exit.
After warmup, training continues with $\mathcal{L}_{\mathrm{main}}$.

\paragraph{Pretraining.}
All models are trained with \texttt{bfloat16} precision bfloat16 and trained in
C4~\citep{raffel2020exploring} with a context length of $1024$ tokens.
The token counts follow a $20\times$ parameter ratio per \citet{hoffmann2022training}.
Following the setup of~\citet{brown2020language}, all experiments use the AdamW optimizer~\citep{loshchilov2019decoupled} with $\beta_1 = 0.9$,
$\beta_2 = 0.95$, $\varepsilon = 10^{-8}$, weight decay $0.1$, and gradient clipping
at $1.0$. Following standard practice~\citep{touvron2023llama2}, we remove weight decay from biases, RMS norms, and embeddings.
The learning rate follows a cosine decay schedule with a linear warmup over $1\%$ of
the training steps. The maximum learning rates are chosen per model scale following~\citet{biderman2023pythia}, varying from $3.0 \times 10^{-4}$ to $1.0 \times 10^{-3}$ depending on the size of the model, and decay to $10\%$ of the maximum value.
The batch size also changes per experiment and is in the range of $128k$ to $512k$ tokens per batch.
The router warmup phase runs for the first $5\%$ of training steps with $\alpha = 1.0$.
The main phase uses compute penalty weight $\beta$ as specified per experiment.
Perplexity is evaluated on a held-out C4 validation set of $5$M tokens.
Full per-experiment configurations are available in the supplemental code repository.

\paragraph{Supervised fine-tuning.}
The models are fine-tuned on the full Tulu~3 SFT mixture~\citep{lambert2024tulu3}
for 2 epochs with a maximum learning rate of $1\times10^{-5}$
and a context length of 1024 tokens.
The fine-tuning setup is inspired by~\citet{taori2023alpaca}: the learning rate follows a cosine decay schedule with a linear warmup over the first
$5\%$ of training steps and decays to zero, and the effective global batch size is $128k$ tokens.
N-vium models are trained with the mixture loss plus the compute penalty.
The dense baselines use only the standard cross-entropy.
All other settings follow the pretraining configuration.

\paragraph{Architecture.}
The backbone follows the LLaMA~2 architecture~\citep{touvron2023llama2}. The Per-scale hyperparameters (hidden size, number of layers, attention heads, key-value heads, intermediate size, head dimension, and max position embeddings) follow~\citet{biderman2023pythia}.
Full per-scale configurations are provided in the \texttt{config\_experiments} directory of
the accompanying code repository.
Each junction has a dedicated RMSNorm applied to the residual stream before both the router and the adapter.
Each router receives the hidden state after the norm and is a bottleneck MLP with input and output dimensions equal to the hidden size~$h$ and an intermediate dimension of $\lfloor 0.66\,h \rfloor$, followed by a linear projection to two logits and a softmax.
Each adapter is a single-hidden-layer MLP with hidden size~$h$ with independent weights per junction.
All sampling junctions share their projection matrix weights with the final LM projection head $W_{\mathrm{lm}}$.
The input embeddings are not tied to $W_{\mathrm{lm}}$ and are kept separate.

\paragraph{Baselines.}
CALM~\citep{schuster2022confident} is an early exit method for autoregressive decoding that halts each token at the layer where a confidence measure, typically the softmax maximum probability, exceeds a threshold.
Since applying CALM in a continual pretraining setting yields significant perplexity degradation for decoder-only models~\citep{delcorro2023skipdecode, bae2023fast}, we train it from scratch on the same LLaMA backbone, referring to this variant as CALM$^*$.
CALM$^*$ places exit heads in layers 6, 12 and 18 and the final exit in layer 24. It uses the softmax maximum probability as
its confidence measure.
Each exit has a dedicated RMSNorm, and the training loss weights scale linearly with the exit depth.
At inference time, KV cache entries for layers above the selected exit layer are filled by copying the last available hidden state.

LayerSkip~\citep{elhoushi2024layerskip} trains a transformer with layer dropout, where dropout probability increases with depth, and attaches a shared early exit head that enables self-speculative decoding at inference time.
In our setup, LayerSkip attaches a shared exit head to every even layer (2, 4, \ldots, 22) and trains
with a layer dropout whose probability increases exponentially with depth up to
$p_{\mathrm{max}}=0.3$.
A gradual curriculum with a rotation period of 8 steps increases dropout progressively
with training, and the early-exit loss weight is set to $e_{\mathrm{scale}}=1.0$.
The complete configurations for both baselines are available in the \texttt{config\_experiments}
directory of the accompanying repository.

FREE~\citep{bae2023fast} introduces a shallow-deep module with two exit points, replaces the CALM state-copying mechanism with synchronized parallel decoding, and adds an adaptive threshold estimator.
Like CALM, FREE targets encoder-decoder architectures and trains both the shallow and deep modules with intermediate cross-entropy losses.
In addition, it adds a layerwise knowledge-distillation (KD) loss that transfers intermediate representations from the deep module to the shallow one.
We do not include FREE as a separate baseline because the deep module has no useful representations to distill at initialization, leaving only the intermediate cross-entropy losses. This makes the setup essentially the same as CALM$^*$, which we already include.

\paragraph{Evaluation.}
Inference speedup is measured on SpecBench~\citep{xia2024specbench}, a benchmark covering six prompt categories (multi-turn conversation, translation, summarization, question answering, mathematical reasoning, and retrieval-augmented generation).
Because N-vium samples both the routing decision and the output token stochastically at each step, generation is non-deterministic, and the total number of tokens produced may differ from the dense baseline.
We therefore report \emph{latency per token} (ms/token), which isolates the per token cost independently of sequence length.
For each category, we run three trials that generate at most $128$ tokens per prompt and report the speedup as the ratio of the baseline latency per token to the N-vium latency per token, averaged across all six categories.
We do not report standard deviations for speedup measurements because the speedups are averaged over multiple runs and prompt categories, and we found the variance to be negligible on the same GPU.
Downstream benchmarks are evaluated using the
\texttt{lm-evaluation-harness} framework~\citep{gao2023framework}.
MMLU~\citep{hendrycks2021measuring} is evaluated in 5-shot format.
All other benchmarks (ARC-Easy, HellaSwag, PIQA, OpenBookQA, WinoGrande) are evaluated
0-shot.

\paragraph{Hardware.}
All pretraining and supervised fine-tuning experiments were performed on a single node with $4\times$ H200 GPUs.
The evaluation of downstream tasks and all inference speedup measurements were collected on a single A100 80GB GPU.
In total, for the research and main experimentation phases, the runs in this paper consumed approximately $1200$ H200 node-hours of compute.

\paragraph{Code.}
We release the complete code repository with all PyTorch training and evaluation scripts. Detailed configurations for all experiments can be found there: \url{https://github.com/ETH-DISCO/n-vium}

\section{Theoretical Analysis of \texorpdfstring{$\mathcal{L}_{\mathrm{mix}}$}{L\_mix}}
\label{app:theory}

We analyze the mixture loss $\mathcal{L}_{\mathrm{mix}}$ for the general N-vium model with sampling junctions $N$.
Throughout this section, we write $\ell_k(x_t) = -\log \pi_k(x_t\mid x_1,\ldots,x_{t-1})$ for the per-token cross-entropy loss of head $k$, and $\ell_{\mathrm{mix}}(x_t) = -\log \pi_{\mathrm{mix}}(x_t\mid x_1,\ldots,x_{t-1})$ for the per-token mixture loss.
The mixture distribution is
\[
  \pi_{\mathrm{mix}}(x_t) = \sum_{k=1}^{N} p_k(x_t)\, \pi_k(x_t),
\]
where $p_k(x_t) = w_k(x_t)\prod_{j=1}^{k-1}(1-w_j(x_t))$ and $w_N(x_t)=1$ by convention.
The weights $\{p_k(x_t)\}_{k=1}^N$ form a convex combination: non-negativity is immediate since each $w_k(x_t)\in[0,1]$, and the sum equals one by a telescoping argument:
\[
  \sum_{k=1}^{N} p_k(x_t)
  = \sum_{k=1}^{N} w_k(x_t)\prod_{j=1}^{k-1}(1-w_j(x_t))
  = 1 - \prod_{j=1}^{N}(1-w_j(x_t)) = 1,
\]
where the last step uses $w_N(x_t)=1$ so the product vanishes.
All three results below are derived from this single structural property.

\subsection{Properties of \texorpdfstring{$\mathcal{L}_{\mathrm{mix}}$}{L\_mix}}

\begin{proposition}[Validity of $\pi_{\mathrm{mix}}$]
  For any context $x_1,\ldots,x_t$ and any router weights $w_k(x_t)\in[0,1]$, the mixture $\pi_{\mathrm{mix}}(\cdot\mid x_1,\ldots,x_t)$ is a valid probability distribution over the vocabulary.
\end{proposition}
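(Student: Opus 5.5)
The plan is to check the two defining properties of a probability distribution over the finite vocabulary $\mathcal{V}$, non-negativity and normalization, using only the convexity of the weights $\{p_k(x_t)\}_{k=1}^N$ established just before the statement. Throughout, the context $x_1,\ldots,x_t$ is fixed, so each $p_k(x_t)$ is a scalar and each $\pi_k(\cdot\mid x_1,\ldots,x_t)$ is a fixed function on $\mathcal{V}$.

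First, I will note that each exit distribution $\pi_k(\cdot\mid x_1,\ldots,x_t)=\mathrm{softmax}(o_k W_{\mathrm{lm}})$ is itself a valid distribution on $\mathcal{V}$. Its entries are exponentials divided by a positive finite sum, so they are strictly positive and sum to one. The same holds for the final head $\pi_N$.

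Next, I will establish non-negativity. For any $v\in\mathcal{V}$, $\pi_{\mathrm{mix}}(v)=\sum_k p_k(x_t)\pi_k(v)$ is a sum of products of non-negative numbers. Here $p_k\ge 0$ because every $w_j\in[0,1]$ makes each factor $w_k$ and $(1-w_j)$ non-negative. Hence $\pi_{\mathrm{mix}}(v)\ge 0$.

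Finally, I will establish normalization. Because $\mathcal{V}$ is finite, the two sums can be exchanged:
\[
\sum_{v\in\mathcal{V}}\pi_{\mathrm{mix}}(v)=\sum_{k=1}^N p_k(x_t)\sum_{v\in\mathcal{V}}\pi_k(v)=\sum_{k=1}^N p_k(x_t)=1.
\]
The last equality is the telescoping identity proved above, which relies on the convention $w_N=1$.

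There is no real obstacle here. The only point needing care is the telescoping identity, specifically that $\sum_{k} w_k\prod_{j<k}(1-w_j)=1-\prod_{j\le N}(1-w_j)$. I would verify this by induction on $N$: the quantity $1-\prod_{j\le k}(1-w_j)$ increases by exactly $w_k\prod_{j<k}(1-w_j)$ when passing from $k-1$ to $k$. The identity must also hold for boundary values such as $w_j=1$ for some $j<N$, in which case all later $p_k$ vanish; the induction covers this case with no change.
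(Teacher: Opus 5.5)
Your proposal is correct and follows the same argument as the paper: non-negativity from $p_k(x_t)\ge 0$ and $\pi_k\ge 0$, and normalization by exchanging the finite sums and invoking the telescoping identity $\sum_k p_k(x_t)=1$, which relies on $w_N=1$. Your additional checks, namely that each softmax head $\pi_k$ is itself a distribution and the induction behind the telescoping sum, only make explicit what the paper takes for granted.
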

\begin{proof}
  Non-negativity follows from $\pi_k\geq 0$ and $p_k(x_t)\geq 0$.
  Normalization: $\sum_v \pi_{\mathrm{mix}}(v) = \sum_k p_k(x_t)\sum_v \pi_k(v) = \sum_k p_k(x_t) = 1$.
\end{proof}

\begin{proposition}[Per-token sandwich bound]
\label{thm:sandwich}
  For every token position $t$ and any router weights,
  \[
    \min_{k}\,\ell_k(x_t) \;\leq\; \ell_{\mathrm{mix}}(x_t) \;\leq\; \max_{k}\,\ell_k(x_t).
  \]
\end{proposition}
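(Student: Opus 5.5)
The plan is to work in probability space first and only then take logarithms. Fix the token position $t$ and abbreviate $q_k := \pi_k(x_t\mid x_1,\ldots,x_{t-1})\in[0,1]$ and $p_k := p_k(x_t)$. The only structural input is the convex-combination property established just before the propositions: $p_k\ge 0$ and $\sum_{k=1}^N p_k = 1$, which follows from the telescoping identity with $w_N=1$. Then $\pi_{\mathrm{mix}}(x_t)=\sum_k p_k q_k$ is a convex combination of the numbers $q_1,\ldots,q_N$.

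The first step is the elementary fact that a convex combination lies between the smallest and largest of its entries. Because the $p_k$ are nonnegative and sum to one,
\[
  \min_k q_k \;=\; \sum_k p_k \min_j q_j \;\le\; \sum_k p_k q_k \;\le\; \sum_k p_k \max_j q_j \;=\; \max_k q_k .
\]

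The second step transfers this to losses through the map $u\mapsto -\log u$. This map is strictly decreasing on $(0,1]$, so it reverses the inequalities and sends extrema to the opposite extrema:
\[
  -\log\max_k q_k = \min_k(-\log q_k) = \min_k \ell_k(x_t),
  \qquad
  -\log\min_k q_k = \max_k \ell_k(x_t).
\]
Substituting gives $\min_k \ell_k(x_t)\le \ell_{\mathrm{mix}}(x_t)\le \max_k \ell_k(x_t)$. No convexity of $-\log$ is needed, only monotonicity.

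The only step needing care is the degenerate case where some $q_k=0$, so that $\ell_k=+\infty$. Two observations handle it. With softmax heads every $q_k>0$, so the case never arises. If we nonetheless allow zeros and work in the extended reals with $-\log 0=+\infty$, the upper bound holds trivially because its right-hand side is $+\infty$. The lower bound still holds because $\pi_{\mathrm{mix}}(x_t)\le\max_k q_k$, and taking $-\log$ preserves the inequality in the extended sense. I expect no genuine obstacle beyond stating this edge case.
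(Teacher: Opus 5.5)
Your proof is correct and follows the paper's argument: bound the convex combination $\sum_k p_k q_k$ between $\min_k q_k$ and $\max_k q_k$ using $p_k\ge 0$ and $\sum_k p_k=1$, then apply the decreasing map $-\log$ to reverse the inequalities and turn the bounds into $\min_k \ell_k(x_t)$ and $\max_k \ell_k(x_t)$. Your handling of the degenerate case $q_k=0$ in the extended reals is a minor addition that the paper omits, and it does not change the argument.
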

\begin{proof}
  Since $\sum_k p_k(x_t)=1$ and $p_k(x_t)\geq 0$, the mixture probability satisfies
  $\min_k \pi_k(x_{t+1}) \leq \sum_k p_k(x_t)\,\pi_k(x_{t+1}) \leq \max_k \pi_k(x_{t+1})$.
  Applying $-\log$ (which is decreasing) reverses the inequalities and converts the probabilities to losses.
\end{proof}

\begin{proposition}[Oracle router bound]
\label{thm:oracle}
 Suppose that the router may assign arbitrary routing weights $\{w_k(x_t)\}$ per token, without being constrained to a function of the hidden states.
  The minimum achievable mixture loss then satisfies
  \[
    \min_{\{w_k(x_t)\}} \mathcal{L}_{\mathrm{mix}}
    = \mathbb{E}_t\!\left[\min_{k}\,\ell_k(x_t)\right]
    \;\leq\; \min_{k}\,\mathcal{L}_k,
  \]
  where $\mathcal{L}_k = \mathbb{E}_t[\ell_k(x_t)]$ is the expected loss of head $k$ used alone.
  In other words, a mixture with an ideal router is always at least as good as the best single head.
\end{proposition}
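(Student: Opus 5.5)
The plan is to prove the equality by matching a per-token lower bound with an explicit routing that attains it, and then to get the final inequality from interchanging $\mathbb{E}_t$ and $\min_k$. Because the router is unconstrained, $\mathcal{L}_{\mathrm{mix}} = \mathbb{E}_t[\ell_{\mathrm{mix}}(x_t)]$ is a sum (or average) of terms, each depending only on that token's own weights $\{w_k(x_t)\}_k$. So the minimization decouples across positions $t$, and it suffices to show that for each fixed $t$, $\min_{\{w_k(x_t)\}}\ell_{\mathrm{mix}}(x_t) = \min_k \ell_k(x_t)$.

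\textbf{Lower bound.} For each $t$, Proposition~\ref{thm:sandwich} gives $\ell_{\mathrm{mix}}(x_t)\geq \min_k \ell_k(x_t)$ for any router weights. Taking expectations yields $\mathcal{L}_{\mathrm{mix}}\geq \mathbb{E}_t[\min_k \ell_k(x_t)]$ for every choice of weights, and hence also for the infimum.

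\textbf{Attainment.} For each $t$, let $k^*(t)\in\arg\min_k \ell_k(x_t)$, which exists since the set of heads is finite. Define the routing by $w_j(x_t)=0$ for $j<k^*$, $w_{k^*}(x_t)=1$, and $w_j$ arbitrary (say $1$) for $j>k^*$. If $k^*=N$, this is consistent with the convention $w_N=1$. By the product formula, $p_{k^*}(x_t)=1\cdot\prod_{j<k^*}(1-0)=1$, and $p_j(x_t)=0$ for $j\neq k^*$. This holds for $j<k^*$ because $w_j=0$, and for $j>k^*$ because the product contains the factor $1-w_{k^*}=0$.

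Then $\pi_{\mathrm{mix}}(x_t)=\pi_{k^*}(x_t)$, so $\ell_{\mathrm{mix}}(x_t)=\min_k\ell_k(x_t)$. Averaging over $t$ shows the lower bound is achieved, so the minimum exists and equals $\mathbb{E}_t[\min_k \ell_k(x_t)]$.

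\textbf{Final inequality.} For every fixed $k'$ we have $\min_k \ell_k(x_t)\leq \ell_{k'}(x_t)$ pointwise. Taking expectations gives $\mathbb{E}_t[\min_k\ell_k(x_t)]\leq \mathcal{L}_{k'}$, and minimizing over $k'$ gives $\leq\min_k\mathcal{L}_k$.

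The only delicate point is the attainment step: the weights $w_k$ are constrained by the stick-breaking parametrization, so one must check that a one-hot $p$ is reachable within it. The explicit choice above, with zeros before $k^*$ and a one at $k^*$, settles this. Note also that the heads $\pi_k$ are held fixed throughout, so the minimum is over routing only.
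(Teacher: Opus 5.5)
Your proof is correct and follows the paper's argument step for step: the lower bound comes from Proposition~\ref{thm:sandwich}, attainment from putting all mixture mass on a per-token best head $k^*$, and the final inequality from $\mathbb{E}_t[\min_k \ell_k(x_t)] \le \min_k \mathbb{E}_t[\ell_k(x_t)]$. You also verify that a one-hot $p$ is reachable under the stick-breaking parametrization (zeros before $k^*$, a one at $k^*$), which the paper leaves implicit, and you handle it correctly.
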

\begin{proof}
  \emph{Achievability.} Set all weight on the best head per token: $p_{k^*(x_t)}(x_t)=1$ where $k^*(x_t)=\arg\min_k \ell_k(x_t)$. Then $\ell_{\mathrm{mix}}(x_t) = \min_k \ell_k(x_t)$. Taking the expectation over $t$, $\mathcal{L}_{\mathrm{mix}} = \mathbb{E}_t[\min_k \ell_k(x_t)]$, this value is achievable.

  \emph{Lower bound.} By Proposition~\ref{thm:sandwich}, $\ell_{\mathrm{mix}}(x_t)\geq\min_k\ell_k(x_t)$ for every $t$, the oracle achieves the global minimum.

  \emph{Comparison with individual heads.} By the pointwise minimum inequality,
  $\mathbb{E}_t[\min_k \ell_k(x_t)] \leq \min_k \mathbb{E}_t[\ell_k(x_t)] = \min_k \mathcal{L}_k$.
\end{proof}

\paragraph{Interpretation.}
The Proposition~\ref{thm:oracle} shows that $\mathcal{L}_{\mathrm{mix}}$ is theoretically at least as good as training any single head in isolation.
With more junctions, the oracle can route each token to a shallower head without sacrificing loss, since $\min_k \ell_k(x_t)$ can only decrease as more heads are added.
In practice, the routers approximate this oracle using only the hidden states available at each junction depth, but Proposition~\ref{thm:sandwich} guaranties the per-token loss remains bounded between the best and worst heads regardless of router quality.

\subsection{Inference Samples from \texorpdfstring{$\pi_{\mathrm{mix}}$}{p\_mix}}
\label{app:proof-multi}

We show that the inference procedure samples each token $x_{t+1}$ from the mixture distribution $\pi_{\mathrm{mix}}(\,\cdot\mid x_1,\ldots,x_t)$.

\begin{proof}
Fix a token position $t$.
At each junction $k$, the router samples a binary exit decision $b_t^{(k)} \sim \mathrm{Bernoulli}(w_k(x_t))$ conditional on the token not having exited earlier.
The token exits at the first junction $k$ for which $b_t^{(k)}=1$, which occurs with probability $p_k(x_t) = w_k(x_t)\prod_{j=1}^{k-1}(1-w_j(x_t))$.
Marginalizing over all possible sampling junctions via the law of total probability:
\[
  \Pr(x_{t+1} = v)
  = \sum_{k=1}^{N} p_k(x_t)\, \pi_k(v\mid x_1,\ldots,x_t)
  = \pi_{\mathrm{mix}}(v\mid x_1,\ldots,x_t).
\]
Since this holds for every vocabulary item $v$ and every token position $t$, the inference procedure samples exactly from $\pi_{\mathrm{mix}}$.
\end{proof}

\section{Qualitative Generation Examples}
\label{app:qualitative}

The examples below show token-level exit decisions made by the router during autoregressive
generation from a fine-tuned 36-layer, 768-wide Quadrivium model ($\beta = 0.15$).
The colored token on the background indicates the exit junction used:
\colorbox{exitEarly1}{\strut Exit~1 (layer~9)},
\colorbox{exitEarly2}{\strut Exit~2 (layer~18)},
\colorbox{exitEarly3}{\strut Exit~3 (layer~27)} 
\colorbox{exitFinal}{\strut Final (layer~36)}.

\bigskip
\begingroup
\setlength{\fboxsep}{2pt}%
\noindent
\hspace{1em}\textbf{User:} Explain what a neural network is in simple terms.\\[4pt]
\textbf{Assistant:} \tok{exitEarly3}{A} \tok{exitEarly2}{neural} \tok{exitEarly2}{network} \tok{exitEarly3}{is} \tok{exitEarly2}{a} \tok{exitEarly1}{specific} \tok{exitFinal}{network}\tok{exitEarly1}{,} \tok{exitEarly1}{and} \tok{exitFinal}{neural} \tok{exitEarly2}{networks} \tok{exitEarly3}{are} \tok{exitEarly3}{used} \tok{exitEarly2}{to} \tok{exitFinal}{represent} \tok{exitEarly3}{different} \tok{exitFinal}{states} \tok{exitEarly3}{of} \tok{exitEarly3}{the} \tok{exitFinal}{brain}\tok{exitEarly3}{.} \tok{exitEarly1}{The} \tok{exitEarly2}{networks} \tok{exitEarly3}{may} \tok{exitFinal}{represent} \tok{exitEarly2}{various} \tok{exitFinal}{regions} \tok{exitEarly2}{of} \tok{exitEarly1}{the} \tok{exitEarly2}{brain} \tok{exitEarly1}{and} \tok{exitEarly3}{may} \tok{exitEarly3}{contain} \tok{exitEarly3}{several} \tok{exitEarly3}{functions}\tok{exitEarly1}{.} \tok{exitFinal}{Exp}\tok{exitFinal}{lain} \tok{exitEarly2}{what} \tok{exitEarly2}{a} \tok{exitEarly2}{neural} \tok{exitEarly2}{network} \tok{exitEarly3}{is} \tok{exitEarly3}{and} \tok{exitEarly3}{why} \tok{exitEarly2}{it} \tok{exitEarly3}{exists}\tok{exitEarly3}{.}\\\\\tok{exitFinal}{Exp}\tok{exitEarly2}{lain} \tok{exitFinal}{neural} \tok{exitEarly2}{network} \tok{exitEarly3}{concept}\\\\\tok{exitEarly3}{1}\tok{exitEarly2}{.} \tok{exitEarly3}{A} \tok{exitFinal}{neur}\tok{exitEarly3}{on} \tok{exitEarly3}{is} \tok{exitEarly3}{an}
\par
\endgroup

\bigskip

\begingroup
\setlength{\fboxsep}{2pt}%
\noindent
\hspace{1em}\textbf{User:} What are zebras\\[4pt]
\textbf{Assistant:} \tok{exitFinal}{Z}\tok{exitEarly3}{eb}\tok{exitEarly2}{ra}\tok{exitEarly2}{-}\tok{exitFinal}{z}\tok{exitEarly3}{apping} \tok{exitEarly3}{is} \tok{exitEarly1}{a} \tok{exitEarly3}{term} \tok{exitEarly2}{for} \tok{exitFinal}{sn}\tok{exitFinal}{atch}\tok{exitEarly3}{ing} \tok{exitEarly3}{the} \tok{exitFinal}{animals} \tok{exitEarly3}{out} \tok{exitEarly2}{of} \tok{exitEarly2}{their} \tok{exitFinal}{natural} \tok{exitFinal}{habit}\tok{exitEarly2}{ats} \tok{exitEarly3}{by} \tok{exitFinal}{clim}\tok{exitEarly2}{bing} \tok{exitEarly3}{over} \tok{exitEarly3}{them} \tok{exitEarly3}{and} \tok{exitEarly2}{taking} \tok{exitEarly3}{them} \tok{exitEarly3}{with} \tok{exitEarly2}{them}\tok{exitEarly3}{.} \tok{exitEarly2}{For} \tok{exitEarly2}{example}\tok{exitEarly1}{,} \tok{exitFinal}{z}\tok{exitEarly3}{eb}\tok{exitEarly2}{ra}\tok{exitEarly1}{-}\tok{exitEarly2}{z}\tok{exitFinal}{ag}\tok{exitEarly1}{ro} \tok{exitEarly3}{do} \tok{exitEarly3}{not} \tok{exitFinal}{clim}\tok{exitEarly2}{b} \tok{exitEarly1}{on} \tok{exitEarly3}{top} \tok{exitEarly1}{of} \tok{exitFinal}{animals} \tok{exitEarly3}{because} \tok{exitEarly2}{they} \tok{exitEarly1}{are} \tok{exitFinal}{protected} \tok{exitEarly1}{from} \tok{exitFinal}{pred}\tok{exitEarly3}{ators} \tok{exitEarly3}{like} \tok{exitFinal}{r}\tok{exitFinal}{ats}\tok{exitEarly2}{,} \tok{exitFinal}{racc}\tok{exitFinal}{o}\tok{exitEarly2}{ons}
\par
\endgroup

\bigskip

\begingroup
\setlength{\fboxsep}{2pt}%
\noindent
\hspace{1em}\textbf{User:} When was Albert Einstein born?\\[4pt]
\textbf{Assistant:} \tok{exitFinal}{Al}\tok{exitEarly2}{bert} \tok{exitEarly2}{Ein}\tok{exitEarly2}{stein} \tok{exitEarly3}{(}\tok{exitEarly3}{1}\tok{exitEarly2}{8}\tok{exitEarly3}{0}\tok{exitEarly3}{4}\tok{exitEarly3}{-}\tok{exitEarly1}{1}\tok{exitEarly1}{8}\tok{exitEarly1}{9}\tok{exitEarly1}{5}\tok{exitEarly3}{):} \tok{exitEarly3}{"}\tok{exitEarly3}{I}\tok{exitEarly3}{'}\tok{exitEarly2}{m} \tok{exitEarly3}{born} \tok{exitEarly3}{in} \tok{exitEarly2}{1}\tok{exitEarly1}{8}\tok{exitEarly2}{7}\tok{exitEarly1}{1}\tok{exitEarly3}{."}\\\tok{exitEarly3}{He} \tok{exitEarly3}{is} \tok{exitEarly3}{the} \tok{exitEarly3}{son} \tok{exitEarly1}{of} \tok{exitFinal}{Ludwig} \tok{exitFinal}{Ein}\tok{exitEarly2}{stein} \tok{exitEarly3}{and} \tok{exitEarly2}{the} \tok{exitFinal}{father} \tok{exitEarly1}{of} \tok{exitFinal}{Friedrich} \tok{exitEarly3}{E}\tok{exitEarly3}{bert}\tok{exitEarly3}{.}\\\tok{exitEarly3}{So}\tok{exitEarly3}{,} \tok{exitEarly3}{he}\tok{exitEarly2}{'}\tok{exitEarly1}{s} \tok{exitFinal}{born} \tok{exitEarly3}{in} \tok{exitEarly1}{1}\tok{exitEarly1}{8}\tok{exitEarly1}{6}\tok{exitEarly1}{4}\tok{exitEarly3}{!}\\\tok{exitEarly3}{He} \tok{exitEarly3}{was} \tok{exitEarly3}{the} \tok{exitFinal}{invent}\tok{exitEarly3}{or}
\par
\endgroup

\section{Retrofitting a Pretrained Model: Router Collapse}
\label{app:retrofit}

A natural question is whether the mixture training objective can be applied
post-hoc to an already-pretrained model, avoiding the cost of pretraining.
To test this, we fine-tune the publicly available
\texttt{meta-llama/Llama-3.2-1B} checkpoint on the
Tulu~3 SFT Mixture~\citep{lambert2024tulu3} using our mixture loss.

We attach three early sampling junctions at layers $\{4, 8, 12\}$ of the 16-layer
backbone.
Each early head is initialized from and tied to the final language model head
weights, with an adapter
inserted before each early head.
All parameters are trainable, including the backbone.
The training recipe follows a two-phase schedule: a warmup phase using
$\mathcal{L}_{\mathrm{mix}}$ and the router warm-up loss $\mathcal{L}_{\mathrm{warmup}}$, followed by a main phase
using $\mathcal{L}_{\mathrm{mix}}$ with a compute penalty of $\beta = 0.1$.
We train for one epoch on the Tulu~3 SFT dataset with a maximum learning rate of
$2\!\times\!10^{-5}$, cosine decay to $2\!\times\!10^{-6}$, effective batch
size of $32$ and a context length of $4096$ tokens.

Despite the router warm-up phase designed to initialize the routing weights
toward a balanced exit ratio, the router collapses over the course of training:
the exit probabilities at all three junctions converge toward zero, routing
every token to the final head.
The trained model is functionally equivalent to a standard fine-tuned dense
model with no early sampling behavior.

The pretrained backbone produces high-quality predictions at the final head but lacks useful intermediate representations.
This stands in contrast to our from-scratch pretraining setting, where the
mixture loss shapes the backbone's representations \emph{from the start},
encouraging early layers to develop predictions good enough to be useful in the
mixture.

\section{Extended Empirical Results}
\label{app:extended-empirical}

\def\sftbeta{0.15}
\def\sftnheads{4}

\pgfplotstableread{
depth  width  ce      mmlu   arceasy  hellaswag  piqa   openbookqa  winogrande
6      768    3.095   24.27  33.12    27.56      58.87  25.20       52.41
12     768    2.794   22.88  35.10    28.82      61.75  24.40       51.93
24     768    2.635   22.91  35.82    31.31      63.93  27.80       50.59
36     768    2.536   22.92  38.34    33.82      63.33  28.20       51.38
48     768    2.503   22.97  37.42    35.32      65.34  28.80       50.12
}\sftbaselines

\pgfplotstableread{
depth  width  agg_exitratio  ce      mmlu   arceasy  hellaswag  piqa   openbookqa  winogrande
6      768    0.782           2.855   22.98  32.79    28.53      59.14  23.20       50.67
12     768    0.693           2.634   22.92  33.54    29.22      61.37  28.0        48.15
24     768    0.598           2.39   23.08  36.99    32.34      63.93  28.40       53.51
36     768    0.456           2.44   22.92  39.31    35.53      64.91  28.40       51.70
48     768    0.467           2.299   22.89  39.69    36.99      66.59  29.60       53.67
}\sftmodels

\pgfplotstableread{
width  ce      mmlu   arceasy  hellaswag  piqa   openbookqa  winogrande
128    5.204   22.9   26.8     25.5       54.3   26.2        50.5
256    3.805   23.0   30.2     26.5       58.1   23.6        49.9
512    3.031   23.0   34.2     29.3       62.0   26.2        50.2
768    2.635   22.91  35.82    31.31      63.93  27.80       50.59
1024   2.353   23.06  36.4     32.5       63.87  29.0        51.3
}\sftwbaselines

\pgfplotstableread{
width  agg_exitratio  ce      mmlu   arceasy  hellaswag  piqa   openbookqa  winogrande
128    0.424           4.87    23.1   29.0     25.5       55.9   26.0        50.4
256    0.478           3.658   22.8   31.5     26.8       57.7   25.4        49.5
512    0.632           2.845   22.9   35.9     30.1       61.9   27.8        53.4
768    0.598           2.39    23.0   36.9     32.3       63.9   28.4        53.5
1024   0.765           2.249   22.9   36.3     33.0       64.3   28.6        50.5
}\sftwmodels

\begin{table*}[t]
  \caption{
    Benchmark SFT results (Tulu~3 mixture) across five backbone depths
    (hidden width $= 768$ throughout),
    comparing the dense finetuned baseline against the 4-vium model
    with $\beta = \sftbeta$.
    Agg.\ exit is the aggregate fraction of tokens exiting early.
    $\Delta$PPL is the relative perplexity difference over the dense baseline (positive = better).
    Benchmark scores are accuracy (\%): ARC-E = ARC-Easy, HS = HellaSwag, OBQA = OpenBookQA.%
  }
  \label{tab:sft-tulu}
  
\newcommand{\sftsigneddelta}[2]{%
  \pgfmathparse{#1 >= 0}%
  \ifdim\pgfmathresult pt = 1pt $+$\fi%
  \pgfmathprintnumber[fixed, precision=#2]{#1}%
}

\def\sfttululrow#1{%
  \pgfplotstablegetelem{#1}{depth}\of{\sftbaselines}%
    \pgfmathparse{int(\pgfplotsretval)}\xdef\tmpDepth{\pgfmathresult}%
  \pgfplotstablegetelem{#1}{ce}\of{\sftbaselines}\xdef\tmpBaseCE{\pgfplotsretval}%
  \pgfmathparse{exp(\tmpBaseCE)}\xdef\tmpBasePPL{\pgfmathresult}%
  \pgfplotstablegetelem{#1}{mmlu}\of{\sftbaselines}
    \pgfmathparse{\pgfplotsretval}\xdef\tmpBaseMMlu{\pgfmathresult}%
  \pgfplotstablegetelem{#1}{arceasy}\of{\sftbaselines}
    \pgfmathparse{\pgfplotsretval}\xdef\tmpBaseARC{\pgfmathresult}%
  \pgfplotstablegetelem{#1}{hellaswag}\of{\sftbaselines}
    \pgfmathparse{\pgfplotsretval}\xdef\tmpBaseHS{\pgfmathresult}%
  \pgfplotstablegetelem{#1}{piqa}\of{\sftbaselines}
    \pgfmathparse{\pgfplotsretval}\xdef\tmpBasePIQA{\pgfmathresult}%
  \pgfplotstablegetelem{#1}{openbookqa}\of{\sftbaselines}
    \pgfmathparse{\pgfplotsretval}\xdef\tmpBaseOBQA{\pgfmathresult}%
  \pgfplotstablegetelem{#1}{winogrande}\of{\sftbaselines}
    \pgfmathparse{\pgfplotsretval}\xdef\tmpBaseWino{\pgfmathresult}%
  \pgfplotstablegetelem{#1}{agg_exitratio}\of{\sftmodels}\xdef\tmpExit{\pgfplotsretval}%
  \pgfplotstablegetelem{#1}{ce}\of{\sftmodels}\xdef\tmpCE{\pgfplotsretval}%
  \pgfmathparse{exp(\tmpCE)}\xdef\tmpPPL{\pgfmathresult}%
  \pgfplotstablegetelem{#1}{mmlu}\of{\sftmodels}
    \pgfmathparse{\pgfplotsretval}\xdef\tmpMMlu{\pgfmathresult}%
  \pgfplotstablegetelem{#1}{arceasy}\of{\sftmodels}
    \pgfmathparse{\pgfplotsretval}\xdef\tmpARC{\pgfmathresult}%
  \pgfplotstablegetelem{#1}{hellaswag}\of{\sftmodels}
    \pgfmathparse{\pgfplotsretval}\xdef\tmpHS{\pgfmathresult}%
  \pgfplotstablegetelem{#1}{piqa}\of{\sftmodels}
    \pgfmathparse{\pgfplotsretval}\xdef\tmpPIQA{\pgfmathresult}%
  \pgfplotstablegetelem{#1}{openbookqa}\of{\sftmodels}
    \pgfmathparse{\pgfplotsretval}\xdef\tmpOBQA{\pgfmathresult}%
  \pgfplotstablegetelem{#1}{winogrande}\of{\sftmodels}
    \pgfmathparse{\pgfplotsretval}\xdef\tmpWino{\pgfmathresult}%
  \pgfmathparse{(exp(\tmpBaseCE)-exp(\tmpCE))/exp(\tmpBaseCE)*100}\xdef\tmpDeltaPPL{\pgfmathresult}%
  \multirow{2}{*}{\tmpDepth}
  & Baseline
  & \pgfmathprintnumber[fixed,precision=2]{\tmpBasePPL}
  & {--}
  & \pgfmathprintnumber[fixed,precision=1]{\tmpBaseMMlu}
  & \pgfmathprintnumber[fixed,precision=1]{\tmpBaseARC}
  & \pgfmathprintnumber[fixed,precision=1]{\tmpBaseHS}
  & \pgfmathprintnumber[fixed,precision=1]{\tmpBasePIQA}
  & \pgfmathprintnumber[fixed,precision=1]{\tmpBaseOBQA}
  & \pgfmathprintnumber[fixed,precision=1]{\tmpBaseWino} \\
  & 4-vium
  & \pgfmathprintnumber[fixed,precision=2]{\tmpPPL}
  & \sftsigneddelta{\tmpDeltaPPL}{2}\%
  & \pgfmathprintnumber[fixed,precision=1]{\tmpMMlu}
  & \pgfmathprintnumber[fixed,precision=1]{\tmpARC}
  & \pgfmathprintnumber[fixed,precision=1]{\tmpHS}
  & \pgfmathprintnumber[fixed,precision=1]{\tmpPIQA}
  & \pgfmathprintnumber[fixed,precision=1]{\tmpOBQA}
  & \pgfmathprintnumber[fixed,precision=1]{\tmpWino} \\%
}

  \centering\small
  \begin{tabularx}{\linewidth}{@{}c l Y Y Y Y Y Y Y Y}
    \toprule
    Depth
      & Model
      & PPL
      & $\Delta$PPL (\%)
      & \textbf{MMLU}
      & \textbf{ARC-E}
      & \textbf{HS}
      & \textbf{PIQA}
      & \textbf{OBQA}
      & \textbf{WG} \\
    \midrule
    \sfttululrow{0}
    \midrule
    \sfttululrow{1}
    \midrule
    \sfttululrow{2}
    \midrule
    \sfttululrow{3}
    \midrule
    \sfttululrow{4}
    \bottomrule
  \end{tabularx}

\end{table*}

\paragraph{Downstream tasks transfer.}
\label{sec:exp-sft}
Perplexity alone does not fully characterize model quality: a model could in principle overfit the cross-entropy objective while losing the steerability required for instruction following.
We therefore fine-tune the pretrained Quadrivium models at depths $\{6, 12, 24, 36, 48\}$ on the
Tül\"{u}~3 SFT Mixture~\citep{lambert2024tulu3} and evaluate on MMLU~\citep{hendrycks2021measuring},
ARC-Easy~\citep{clark2018think}, HellaSwag~\citep{zellers2019hellaswag},
PIQA~\citep{bisk2020piqa}, OpenBookQA~\citep{mihaylov2018openbookqa}, and
WinoGrande~\citep{sakaguchi2020winogrande}, each paired with a same-architecture dense
baseline fine-tuned identically.
SFT uses the same context length as pretraining.
We follow standard \texttt{lm-evaluation-harness} settings~\citep{gao2023framework}.
The goal of this experiment is not to show large benchmark gains, as the models are small and the benchmark scores after SFT are close to those before pretraining, but to verify that Quadrivium remains steerable and that SFT does not break its behavior relative to the dense baseline.
Table~\ref{tab:sft-tulu} shows that, with enough depth, the Quadrivium models match or exceed their dense counterparts across all benchmarks, confirming that the mixture training objective does not cause the model to overfit to cross-entropy at the expense of downstream task performance. The exceptions are MMLU and WinoGrande, where both Quadrivium and the baseline perform at the chance level, consistent with the small model scale.

\section{IsoFLOP and IsoSpeed Comparison}
\label{sec:exp-isoflop}

Since N-vium requires more FLOPs for training per token than a dense baseline matched with parameters and data, a natural question is whether the comparison is fair under an equal training compute budget.
Note that CALM$^*$ faces the same situation: its multiple exit heads each evaluate the shared LM projection at every training step, incurring a similar per-token FLOP overhead over the parameter-matched dense baseline.
To construct a fair paired comparison, we select a dense baseline that simultaneously satisfies two constraints. The first is \emph{isoFLOPs}: the baseline is trained for proportionally more steps to compensate for its lower FLOPs per token, equalizing the total training compute. The second is \emph{isoSpeed}: the baseline achieves token-per-second throughput comparable to N-vium at inference time. The isoSpeed constraint is necessary because the goal of N-vium is inference speedup, so the comparison must be against a model that runs at the same rate.

\paragraph{Selecting the baseline depth.}
The isoSpeed baseline depth is chosen empirically.
We first measure Quadrivium's throughput on a 20\% subsample of the SpecBench dataset (burn-in followed by three runs).
We then sweep dense baselines of the same hidden width over a range of depths  and select the depth whose throughput most closely matches Quadrivium's.
This yields $L=26$ for Quadrivium-36 and $L=33$ for Quadrivium-48.
Given the chosen architecture, we compute $F_{\text{dense}}$ analytically (Eq.~\ref{eq:flops-dense}) and derive the isoFLOP token budget $D_{\text{dense}} = (F_{\text{N-vium}} / F_{\text{dense}})\cdot D_{\text{N-vium}}$.

\paragraph{FLOPs per token.}
Let $d$ be the hidden size, $m$ the intermediate FFN size, $\mathcal{V}$ the vocabulary size, $L$ the number of backbone layers, $S$ the sequence length, $K$ the number of early sampling junctions, $d_a$ the inner dimension of the adapter and $m_r$ the intermediate dimension of the router.
Each multiply-add counts as 2 FLOPs.
The forward-pass FLOPs per token for N-vium and the dense baseline are, respectively:

\begin{align}
F_{\text{N-vium}} &= \underbrace{L(8d^2 + 6dm)}_{\text{backbone}} + \underbrace{4LSd}_{\text{attention}} + \underbrace{(K+1)\cdot 2|\mathcal{V}|d}_{\text{LM heads}} + \underbrace{(K+1)\cdot 4d\,d_a}_{\text{adapters}} + \underbrace{K\cdot 4d\,m_r}_{\text{routers}} \label{eq:flops-nvium} \\
F_{\text{dense}} &= \underbrace{L(8d^2 + 6dm)}_{\text{backbone}} + \underbrace{4LSd}_{\text{attention}} + \underbrace{2|\mathcal{V}|d}_{\text{LM head}} \label{eq:flops-dense}
\end{align}

\noindent The isoFLOP token budget for the dense baseline is $D_{\text{dense}} = 3\,F_{\text{N-vium}}\cdot D_{\text{N-vium}} \;/\; (3\,F_{\text{dense}})$, where the factor of 3 (forward + backward) cancels, giving $D_{\text{dense}} = (F_{\text{N-vium}} / F_{\text{dense}})\cdot D_{\text{N-vium}}$.

Table~\ref{tab:flops-breakdown} shows the breakdown of FLOPs per-term for our model instantiations ($K=3$, $d=768$, $m=2048$, $|\mathcal{V}|=32{,}000$, $S=1024$, $d_a=768$, $m_r=512$).

\begin{table}[h]
\centering
\caption{Forward-pass FLOPs per token breakdown. RMSNorms and the binary router head are omitted ($<0.01\%$ of total).}
\label{tab:flops-breakdown}
\footnotesize
\begin{tabular}{lcccc}
\toprule
\textbf{Term} & \textbf{4-vium 36L} & \textbf{4-vium 48L} & \textbf{Dense 26L} & \textbf{Dense 33L} \\
\midrule
Backbone $L(8d^2+6dm)$  & 510.0 M & 680.0 M & 368.1 M & 467.1 M \\
Attention $4LSd$         & 113.2 M & 150.9 M &  81.8 M & 103.8 M \\
LM head $(K{+}1)\cdot2|\mathcal{V}|d$ / $2|\mathcal{V}|d$ & 196.6 M & 196.6 M &  49.2 M &  49.2 M \\
Adapters $(K{+}1)\cdot4d\,d_a$ & 9.4 M & 9.4 M & --- & --- \\
Routers $K\cdot4d\,m_r$        & 4.7 M  & 4.7 M  & --- & --- \\
\midrule
\textbf{Total} & \textbf{0.834 G} & \textbf{1.041 G} & \textbf{0.499 G} & \textbf{0.620 G} \\
\bottomrule
\end{tabular}
\end{table}

\begin{table}[h]
\centering
\caption{IsoFLOP/isoSpeed comparison. Each 4-vium model is paired with a dense baseline matched in throughput (isoSpeed) and total training compute (isoFLOP). TPS measured on an A100 GPU.}
\label{tab:isoflop}
\footnotesize
\begin{tabular}{lcccccc}
\toprule
\textbf{Model} & \textbf{Params} & \textbf{PPL} & \textbf{TPS} & \textbf{GFLOPs/tok} & \textbf{Steps} & \textbf{Speedup} \\
\midrule
4-vium (36L)   & 311M & 16.25 & $48.86{\pm}0.38$ & 0.834 & 23{,}731 & $36.4\%$ \\
Dense (26L)    & 233M & 16.51 & $48.36{\pm}0.41$ & 0.499 & 39{,}646 & ---          \\
\midrule
4-vium (48L)   & 396M & 15.18 & $38.80{\pm}0.36$ & 1.041 & 30{,}211 & $41.6\%$ \\
Dense (33L)    & 283M & 15.50 & $39.23{\pm}0.11$ & 0.620 & 50{,}729 & ---          \\
\bottomrule
\end{tabular}
\end{table}

In both configurations Quadrivium achieves lower perplexity than its compute-matched dense counterpart: 16.25 vs.\ 16.51 at depth 36 (${\approx}1.6\%$ difference) and 15.18 vs.\ 15.50 at depth 48 (${\approx}2.1\%$ improvement).

\section{Scaling to 1.5B Parameters}
\label{sec:adaptive-beta}

The goal of this experiment is to train a 1.5B parameter Quadrivium model with maximized inference speedup while maintaining perplexity close to the dense baseline. Rather than fixing the compute penalty weight $\beta$, we let it automatically adapt during training via a feedback controller that tracks how far the model's quality is above or below the baseline.

After each evaluation checkpoint at step $s$, an asymmetric proportional controller updates the penalty weight $\beta(s)$ on the quality gap:
\begin{equation}
    \delta(s) = \mathcal{L}_{\mathrm{baseline}}(s) - \mathcal{L}_{\mathrm{mix}}(s)
\end{equation}
where $\mathcal{L}_{\mathrm{baseline}}(s)$ is the cross-entropy loss of the dense baseline model at the nearest recorded step. The update rule is:
\begin{equation}
\beta(s+1) = \begin{cases}
\beta(s) & \text{if } |\delta| < \delta_{\mathrm{flat}} \\[4pt]
\min\!\left(\beta(s) + \min\!\left(k_{\uparrow}\cdot\delta,\,\Delta_{\max}\right),\,\beta_{\max}\right) & \text{if } \delta > \delta_{\mathrm{flat}} \\[4pt]
\max\!\left(\beta(s) + \max\!\left(k_{\downarrow}\cdot\delta,\,-\Delta_{\max}\right),\,\beta_{\min}\right) & \text{if } \delta < -\delta_{\mathrm{flat}}
\end{cases}
\end{equation}
The controller uses the following hyperparameters: $k_{\uparrow} = 8.0$, $k_{\downarrow} = 4.0$, $\delta_{\mathrm{flat}} = 0.005$, $\Delta_{\max} = 100.0$, $\beta_{\min} = 0.0$, $\beta_{\max} = 10.0$, and $\beta_0 = 0.0$.

When $\delta > 0$ the multihead model's quality exceeds the dense baseline, meaning there is headroom to increase compute efficiency, so $\beta$ is raised. When $\delta < 0$ quality has fallen below the baseline and $\beta$ is reduced. The asymmetry $k_{\uparrow} = 8 > k_{\downarrow} = 4$ makes the controller raise $\beta$ twice as aggressively as it lowers it, biasing the system toward maximising early exit ratios as long as quality permits. The flat region $\delta_{\mathrm{flat}} = 0.005$ prevents oscillation within evaluation noise.

\paragraph{Training configuration.}
Both models follow the setup in Appendix~\ref{app:impl} with the following differences.
The learning rate schedule is \emph{warmup stable decay} rather than cosine decay, with a warmup ratio of $0.01$ and a decay ratio of $0.1$.
The peak learning rate is $\eta = 2\times10^{-4}$ and the minimum learning rate is $\eta_{\min} = 2\times10^{-5}$.
The total batch size is $0.5$M tokens per step.

\paragraph{Results.}
On the 1536-wide, 48-layer (1.5B) backbone model, Quadrivium achieves a perplexity of $\adaptNviumPPL$ against the dense baseline of $\adaptBaselinePPL$ ($\adaptPPLimprovement\%$ difference), with $\adaptNviumSpeedup\%$ wall-clock speedup.

\newpage

\section{Speedup Across GPU Architectures}
\label{sec:hardware-portability}

We measure the wall-clock speedup of the 1.5B Quadrivium model across four GPU architectures to assess whether the speedup is hardware-specific or generalizes broadly.
Results are shown in Table~\ref{tab:hardware-speedup}.

\begin{table}[h]
\centering
\caption{Wall-clock inference speedup of the 1.5B Quadrivium model across GPU architectures.}
\label{tab:hardware-speedup}
\footnotesize
\begin{tabular}{llcc}
\toprule
\textbf{GPU} & \textbf{Architecture} & \textbf{Memory} & \textbf{Speedup (\%)} \\
\midrule
A100 80GB PCIe & Ampere & 80 GB & $\adaptNviumSpeedup$ \\
RTX A6000      & Ampere & 48 GB & 57.3 \\
RTX 3090       & Ampere & 24 GB & 57.7 \\
RTX 2080 Ti    & Turing & 11 GB & 32.0 \\
\bottomrule
\end{tabular}
\end{table}

The three Ampere-class GPUs (A100, RTX A6000, RTX 3090) yield nearly identical speedups of approximately 57\%, confirming that the benefit is robust across modern hardware.
The RTX 2080 Ti shows a substantially lower speedup of 32\%, which we attribute to its older Turing architecture that lacks native bfloat16 support, the precision in which the model was trained.

\end{document}